\definecolor{light-gray}{gray}{0.85}
\newtheorem{theorem}{Theorem}
\newtheorem{lemma}{Lemma}
\newtheorem{assumption}{Assumption}
\newtheorem{definition}{Definition}
\newtheorem{proposition}{Proposition}
\newtheorem{remark}{Remark}
\newcommand{\norm}[1]{\left\lVert#1\right\rVert}
\newcommand\inner[2]{\left\langle #1, #2 \right\rangle}
\newcommand{\mc}{\mathcal}
\newcommand{\mbb}{\mathbb}
\newcommand{\mcN}{{\mathcal{N}}}
\newcommand{\mcNk}{\mathcal{N}^\kappa}
\newcommand{\pt}{{\pi_\theta}}
\newcommand{\pti}{{\pi_{\theta_i}^i}}
\newcommand{\ptt}{{\pi_{\theta^t}}}
\newcommand{\hpt}{{\widehat{\pi}_\theta}}
\newcommand{\hpti}{{\widehat{\pi}_{\theta_i}^i}}
\newcommand{\ba}{\begin{array}}
\newcommand{\ea}{\end{array}}
\title{\LARGE \bf
Scalable Multi-Agent Reinforcement Learning with General Utilities
}
\author{Donghao Ying \thanks{Department of Industrial Engineering and Operations Research, University of California at Berkeley. Email: {\tt\small \{donghaoy,yuhao\_ding,lavaei\}@berkeley.edu}} \and
Yuhao Ding$^*$ \and
Alec Koppel \thanks{J.P. Morgan AI Research. Email: {\tt\small alec.koppel@jpmchase.com}} \and
Javad Lavaei$^*$}
\date{}
\begin{document}
\maketitle
\begin{abstract}
We study the scalable  multi-agent reinforcement learning (MARL) with general utilities, defined as nonlinear functions of the team's long-term state-action occupancy measure. 
The objective is to find a localized policy that maximizes the average of the team's local utility functions without the full observability of each agent in the team.
By exploiting the spatial correlation decay property of the network structure, we propose a scalable distributed policy gradient algorithm with shadow reward and localized policy that consists of three steps: (1) shadow reward estimation, (2) truncated shadow Q-function estimation, and (3) truncated policy gradient estimation and policy update. Our algorithm converges, with high probability, to $\epsilon$-stationarity with $\widetilde{\mc{O}}(\epsilon^{-2})$ samples up to some approximation error that decreases exponentially in the communication radius.
This is the first result in the literature on multi-agent RL with general utilities that does not require the full observability.
\end{abstract}


\section{INTRODUCTION}
Many decision-making problems take a form beyond the classic cumulative reward, such as apprenticeship learning \cite{abbeel2004apprenticeship},  diverse skill discovery \cite{eysenbach2018diversity}, pure exploration \cite{hazan2019provably}, and state marginal matching \cite{lee2019efficient}, among others. Such problems can be abstracted
as \textit{reinforcement Learning (RL) with general utilities} \cite{zhang2020variational, zahavy2021reward}, which focus on finding a policy to maximize a nonlinear function of the induced state-action occupancy measure. It generalizes the standard RL in which the objective is only an inner product between the state-action occupancy measure induced by the policy and a policy-independent reward for each state-action pair. 

Beyond the single agent RL, consider the multi-agent problem where different agents need to interact to obtain a favorable outcome by finding a decision policy that maximizes the global accumulation of all agent's general utility. This setting captures a wide range of applications, e.g. epidemics \cite{mei2017dynamics}, social networks \cite{jaques2019social}, finance \cite{lee2002stock}, intelligent transportation \cite{zhang2016control} and wireless communication networks \cite{zocca2019temporal}.
Recently, \cite{zhang2022multi} proposed a new mechanism for cooperation that allows agents to incorporate general utilities for multi-agent RL (MARL)  with common payoffs among agents.
To enable the decentralization of agents’ policies under general utilities, \cite{zhang2022multi} defines local occupancy measure of each agent as a marginalization of the global occupancy measure, and it defines the local general utility of the agent as an arbitrary function of its local occupancy measure. 
Based on these definitions, \cite{zhang2022multi} derives a policy gradient-based algorithm, namely Decentralized Shadow Reward Actor-Critic, where each agent estimates its policy gradient based on local information and communications with its neighbors.

However, their approach assumes the full observability, i.e., each agent  should have access to the  global states and actions of the team. Such assumption has two limitations. 
First, it is expensive and sometimes impossible to communicate with all agents in the team when the size of the team is large. 
In addition,  full observability also implies that the policy and critic networks in this approach depend on the global states and actions of the team, which may be a barrier to effective decentralized implementation in practice. Moreover, even if individual state and action spaces are often small, the size of global state and action spaces can be exponentially large in the number of agents, which can be fundamentally intractable for large numbers of agents \cite{blondel2000survey}. 

To address these issues, we aim to develop a scalable algorithm for multi-agent RL with general utilities without the full observability assumption. 
Inspired by the localization idea proposed in \cite{qu2022scalable}, our work makes the following contributions:
\begin{itemize}
    \item We derive a truncated policy gradient estimator using the shadow reward and the localized policy for MARL with general utilities. We further establish the  approximation error of the proposed truncated policy gradient estimator based on the spatial correlation decay assumptions;
    \item We propose a distributed policy gradient algorithm with shadow reward and localized policy that consists of three pieces: (1) shadow reward estimation, (2) truncated shadow Q-function estimation, and (3) truncated policy gradient estimation and policy update.
    \item We establish that, with high probability, our algorithm requires $\widetilde{\mc{O}}(\epsilon^{-2})$ samples to achieve $\epsilon$-stationarity with the error term $\mc{O}\left(n\phi_0^{2\kappa}\right)$, where $\phi_0 \in (0,1)$, $n$ is the number of agents, $\mcN$ is the set of agents.
\end{itemize}

It is critical to note that the operating hypotheses we require for developing a localized algorithm for MARL are related to, but distinct from \cite{qu2022scalable} in the following sense: we assume the transition dynamics and policies of all agents are globally correlated and the correlation satisfies a spatial decay property. In contrast, the agents are considered to act on their own with their transitions only affected by the nearest neighbors in \cite{qu2022scalable}.

\subsection{Notations}
For a finite set $\mc{S}$, let $\left|\mathcal{S}\right|$ denote its cardinality and let $\operatorname{TV}(\mu,\mu^\prime) := \sup_{A\subset \mc{S}} |\mu(A)-\mu^\prime(A)|$ be the total variation distance between two probability distributions $\mu$ and $\mu^\prime$ on $\mc{S}$.
When the variable $s$ follows the distribution $\xi$, we write it as $s\sim \xi$.
Let $\mbb{E}[\cdot]$ and $\mbb{E}[\cdot\mid \cdot]$, respectively, denote the expectation and conditional expectation of a random variable.
Let $\mbb{R}$ denote the set of real numbers.
For vectors $x$ and $y$, we use $x^\top$ to denote the transpose of $x$ and use $\langle x,y\rangle$ to denote the inner product $x^\top y$.
We use the convention that $\|x\|_1 = \sum_i |x_i|$, $\|x\| := \|x\|_2= \sqrt{\sum_i x_i^2}$, and $\|x\|_\infty = \max_i |x_i|$.
\section{PROBLEM FORMULATION}

Consider an infinite-horizon Markov Decision Process (MDP) over a finite state space $\mc{S}$ and a finite action space $\mc{A}$ with a discount factor $\gamma\in [0,1)$. Let $\xi$ be the initial distribution. A policy $\pi$ is a function that specifies the decision rule of the agent, i.e., the agent takes action $a\in \mc{A}$ with probability $\pi(a\vert s)$ in state $s\in \mc{S}$.
When action $a$ is taken, the transition to the next state $s^\prime$ from state $s$ follows the probability distribution $s^\prime \sim \mbb{P}(\cdot\vert s,a)$.
In standard RL, the objective is to maximize the expected (discounted) cumulative reward, i.e.,
\begin{equation}
\max_{\pi} V^\pi(r):= \mbb{E}\left[\sum_{k=0}^{\infty} \gamma^{k} r\left(s^{k}, a^{k}\right) \bigg| a^k \sim \pi(\cdot \vert s^k), s^{0}\sim \xi \right],
\end{equation}
where $r(\cdot,\cdot)$ denotes the reward function and the expectation is taken over all possible trajectories.
The value function can also be written as $V^\pi(r) = \left\langle r,\lambda^\pi\right\rangle$, where $\lambda^\pi$ is the {\it{discounted state-action occupancy measure}} defined as 
\begin{equation}
\lambda^\pi(s,a)=\sum_{k=0}^{\infty} \gamma^{k} \mathbb{P}\left(s^{k}=s, a^k = a\big\vert a^k \sim \pi(\cdot \vert s^k), s^0 \sim \xi \right), \forall (s,a).
\end{equation}

We consider a more general problem where the objective is to maximize a general function of $\lambda^\pi$, namely
\begin{equation}\label{eq:general prob}
\max_\pi f(\lambda^\pi),
\end{equation}
where $f:\mbb{R}^{|\mc{S}|\cdot|\mc{A}|}\rightarrow \mbb{R}$ can be a possibly nonlinear function.
Such an objective arises in various applications and is commonly referred to as a {\it general utility} \cite{zhang2022multi,zhang2021convergence}.
For instance, in apprenticeship learning \cite{abbeel2004apprenticeship}, the objective is $f(\lambda^\pi) = -\operatorname{dist}(\lambda^\pi,\lambda_e)$, where $\lambda_e$  corresponds to the expert demonstration and $\operatorname{dist}(\cdot,\cdot)$ is a distance function.
In maximum entropy exploration \cite{hazan2019provably}, $f(\cdot)$ refers to the entropy function such that $f(\lambda^\pi) = -\sum_{s}d^\pi(s)\log d^\pi(s)$, where $d^\pi(s) = (1-\gamma)\sum_{a}\lambda^\pi(s,a)$ is the discounted state occupancy measure.

In this work, we study the decentralized version of \eqref{eq:general prob}, where the system is decentralized among a network of agents associated with a graph $\mc{G} = (\mc{N},\mc{E})$ (not densely connected).
The vertex set $\mc{N}=\{1,2,\dots,n\}$ denotes the set of $n$ agents and the edge set $\mc{E}$ prescribes the communication links among agents.
Let $d(i,j)$ be the distance between agents $i$ and $j$ on $\mc{G}$, defined as the length of the shortest path between them.
For $\kappa \geq 0$, we define $\mcNk_i = \{j\in \mcN \vert d(i,j)\leq \kappa\}$ as the set of agents in the neighborhood of radius $\kappa$ of agent $i$, with the shorthand notations $\mcNk_{-i} := \mcN\backslash  \mcNk_i$ and $-i :=\mcN\backslash  \mcN_i^0 = \mcN\backslash\{i\}$.
The details of the decentralization are as follows:
\paragraph{Space Decomposition}The global state and action spaces are the product of local spaces, i.e., $\mc{S}=\mc{S}_1\times \mc{S}_2\times \cdots \times \mc{S}_n$, $\mc{A}=\mc{A}_1\times \mc{A}_2\times \cdots \times \mc{A}_n$, meaning that for every $s\in \mc{S}$ and $a\in \mc{A}$, we can write $s=(s_1,s_2,\dots,s_n)$ and $a = (a_1,a_2,\dots,a_n)$.
For each subset ${\mcN}^\prime\subset \mcN$, we use $(s_{{\mcN}^\prime},a_{{\mcN}^\prime})$ to denote the state-action pair for agents in ${\mcN}^\prime$.
We assume that each agent has direct access to its own states and actions while accessing other agents' information requires communications.

\paragraph{Transition Decomposition}Given the current global state $s$ and action $a$, the local states in the next period are independently generated, i.e.,  $\mbb{P}(s^\prime \vert s,a) = \prod_{i\in \mc{N}} \mbb{P}_i(s_i^\prime \vert s,a)$, $\forall s^\prime \in \mc{S}$, where $\mbb{P}_i$ denotes the local transition probability.

\paragraph{Policy Factorization}The global policy can be decomposed as $\pi(a\vert s) = \prod_{i\in \mc{N}} \pi^{i}\left(a_{i} \vert s_{\mcNk_i}\right), \forall (s,a)$, i.e., given global state $s$, each agent $i$ acts independently according to its local policy $\pi^i$, which depends on the state of agents in $\mcNk_i$. For the policy parameterization, we assume that the local policy of agent $i$ is parameterized by $\theta_i$, and therefore one can write $\pi(a\vert s) = \pi_\theta(a\vert s) = \prod_{i\in \mc{N}}\pi_{\theta_{i}}^{i}\left(a_{i} \vert s_{\mcNk_i}\right)$, where $\theta = (\theta_1,\theta_2,\dots,\theta_n)\in \Theta$ is the global parameter.
\paragraph{Local Utility}For each agent $i$, define its {\it local discounted state-action occupancy measure} as
\begin{equation}
\lambda^\pi_i(s_i,a_i)=\sum_{k=0}^{\infty} \gamma^{k} \mathbb{P}\left(s^{k}_i=s_i, a^k_i = a_i\big\vert a^k \sim \pi(\cdot \vert s^k), s^0 \sim \xi \right), \forall (s_i,a_i),
\end{equation}
which can be viewed as the marginalization of the global occupancy measure, i.e., $\lambda^\pi_i(\hat s_i,\hat a_i) = \sum_{s_i=\hat s_i,a_i=\hat a_i}\lambda^\pi(s,a)$.
Then, the global utility function $f(\cdot)$ can be written as the average of local utilities, i.e., $f(\lambda^\pi) = 1/n\times \sum_{i\in \mc{N}}f_i(\lambda^\pi_i)$, where $f_i:\mbb{R}^{|\mc{S}_i|\cdot|\mc{A}_i|}\rightarrow \mbb{R}$ is a function of the local occupancy measure $\lambda^\pi_i$ and is private to agent $i$.
Thus, under the parameterization $\pi_\theta$, \eqref{eq:general prob} can be rewritten as
\begin{equation}\label{eq:dec_prob}
\max_{\theta\in \Theta} F(\theta),
\text{ where } F(\theta):=f(\lambda^{\pt}) = \frac{1}{n}\cdot\sum_{i\in \mc{N}}f_i(\lambda^\pt_i).
\end{equation}
Finally, we remark that, by choosing all $f_i(\cdot)$ to be linear, \eqref{eq:dec_prob} reduces to standard MARL, where each agent $i$ is associated with a local reward function $r_i:\mc{S}_i\times \mc{A}_i\rightarrow \mbb{R}$ and the global reward is defined as $r(s,a):=1/n\times \sum_{i\in \mc{N}}r_i(s_i,a_i)$.

\section{Truncated Policy Gradient Algorithm with Shadow Reward}\label{sec:alg}
In RL with cumulative reward, the \textit{policy gradient theorem} \cite{sutton1999policy} applies to computing the gradient of the value function:
\begin{equation}
\begin{aligned}
\nabla_\theta V^\pt(r)&=\frac{1}{1-\gamma}\mbb{E}_{s\sim d^\pt,a\sim\pt(\cdot\vert s)}\left[\psi_\theta(a\vert s)\cdot Q^\pt(r;s,a)\right],\\
&=\mbb{E}\left[\sum_{k=0}^\infty \gamma^k\psi_\theta(a^k\vert s^k)\cdot Q^\pt(r;s^k,a^k)\bigg\vert a^k \sim \pt(\cdot \vert s^k), s^0\sim\xi \right],
\end{aligned}
\end{equation}
where $\psi_\theta(\cdot \vert \cdot ):=\nabla_\theta \log \pt(\cdot \vert \cdot )$ denotes the score function, and $Q^\pi(r;s,a)$ is the state-action value function (Q-function) under reward $r(\cdot,\cdot)$, defined as 
\begin{equation}
Q^\pi(r;s,a)=\mbb{E}\left[\sum_{k=0}^{\infty} \gamma^{k} r\left(s^{k}, a^{k}\right) \bigg|  a^k \sim \pi(\cdot \vert s^k), s^{0}=s,a^0=a \right].
\end{equation}
However, for objective \eqref{eq:dec_prob} with general utilities, this elegant result no longer holds. 
Instead, we have the following lemma.
\begin{lemma}\label{lemma:policy_gradient}
For every policy $\pt$, it holds that
\begin{equation}
\nabla_\theta F(\theta) = \frac{1}{1-\gamma}\mbb{E}_{s\sim d^\pt,a\sim\pt(\cdot\vert s)}\left[\psi_\theta(a\vert s)\cdot Q^\pt_f(s,a)\right],
\end{equation}
where $Q^\pt_f(\cdot,\cdot) := Q^\pt(r^\pt;\cdot,\cdot)$ is the {\it shadow Q-function} and $r^\pt:= \nabla_\lambda f(\lambda^\pt)\in \mbb{R}^{|\mc{S}|\times|\mc{A}|}$ is the {\it shadow reward} associated with policy $\pt$.
\end{lemma}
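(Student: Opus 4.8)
The plan is to reduce the statement to the ordinary policy gradient theorem (the displayed identity for $\nabla_\theta V^\pt(r)$ just above the lemma) by treating the shadow reward as a \emph{frozen}, policy-independent vector. First I would write $F(\theta) = f(\lambda^\pt)$ and apply the chain rule, using that $\lambda^\pt$ is a differentiable function of $\theta$ through the occupancy measure while $f$ is differentiable in its argument. This gives
\begin{equation}
\nabla_\theta F(\theta) = \inner{\nabla_\lambda f(\lambda^\pt)}{\nabla_\theta \lambda^\pt} = \inner{r^\pt}{\nabla_\theta \lambda^\pt},
\end{equation}
where $\nabla_\lambda f$ is evaluated at the current occupancy measure $\lambda^\pt$ and hence equals the shadow reward $r^\pt$ by its very definition.

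The key observation is that the right-hand side coincides with the gradient of a \emph{linear} value function with a fixed reward. Indeed, for any policy-independent vector $r$ the value function is linear in the occupancy measure, $V^\pt(r) = \inner{r}{\lambda^\pt}$, so differentiating in $\theta$ with $r$ held constant yields $\nabla_\theta V^\pt(r) = \inner{r}{\nabla_\theta \lambda^\pt}$. Setting $r = r^\pt$ as a frozen vector matches the expression above exactly. This is precisely where the chain rule does the essential work: it separates the $\theta$-dependence of $f$ into an outer factor $\nabla_\lambda f(\lambda^\pt) = r^\pt$, frozen at its current value, and an inner factor $\nabla_\theta \lambda^\pt$, so that $r^\pt$ is never itself differentiated through its own $\theta$-dependence.

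With this identification, I would invoke the standard policy gradient theorem applied to the cumulative-reward objective $V^\pt(r^\pt)$, with $r^\pt$ regarded as a constant reward function, to obtain
\begin{equation}
\inner{r^\pt}{\nabla_\theta \lambda^\pt} = \nabla_\theta V^\pt(r^\pt) = \frac{1}{1-\gamma}\mbb{E}_{s\sim d^\pt, a\sim \pt(\cdot\vert s)}\left[\psi_\theta(a\vert s)\cdot Q^\pt(r^\pt;s,a)\right].
\end{equation}
Recognizing $Q^\pt(r^\pt;\cdot,\cdot) = Q^\pt_f(\cdot,\cdot)$ by the definition of the shadow Q-function then completes the argument.

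I expect the main obstacle to be purely technical rather than conceptual: justifying the interchange of differentiation with the infinite discounted sum defining $\lambda^\pt$, so that $\nabla_\theta \lambda^\pt$ exists and the chain rule is valid (dominated convergence together with $\gamma \in [0,1)$ should suffice), and stating carefully that $r^\pt$ must be held fixed when the ordinary policy gradient theorem is applied. The conceptual content is simply that a nonlinear utility, once linearized around the current occupancy measure, becomes a cumulative-reward problem whose reward is the local gradient $\nabla_\lambda f(\lambda^\pt)$.
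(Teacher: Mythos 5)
Your proposal is correct and follows essentially the same route as the paper's own proof: apply the chain rule to write $\nabla_\theta F(\theta) = \left\langle \nabla_\lambda f(\lambda^{\pi_\theta}), \nabla_\theta \lambda^{\pi_\theta}\right\rangle$, identify this with $\nabla_\theta V^{\pi_\theta}(r^{\pi_\theta})$ for the frozen reward $r^{\pi_\theta}$ via linearity of the value function in the occupancy measure, and invoke the standard policy gradient theorem. Your explicit emphasis that $r^{\pi_\theta}$ must be held fixed when the policy gradient theorem is applied is exactly the subtlety the paper's proof relies on, just stated more carefully.
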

\begin{proof}
For value functions with cumulative reward, we observe the relation $\nabla_\theta V^\pt(r) = \nabla_\theta \left\langle r,\lambda^\pt \right\rangle =  \left\langle r,\nabla_\theta\lambda^\pt \right\rangle$. Thus, by the chain rule, we have that
\begin{equation}
\nabla_\theta F(\theta)=\nabla_\theta f(\lambda^\pt) = \left\langle \nabla_\lambda f(\lambda^\pt),\nabla_\theta\lambda^\pt \right\rangle  = \nabla_\theta V^\pt(r^\pt),
\end{equation}
which completes the proof by the policy gradient theorem.
\end{proof}

In the decentralized formulation \eqref{eq:dec_prob}, for each agent $i$, let $r^\pt_i := \nabla_{\lambda_i} f_i(\lambda^\pt_i)\in \mbb{R}^{|\mc{S}_i|\times|\mc{A}_i|}$ be the local shadow reward, which only depends on the local state and action for a given policy $\pt$, and we define the local shadow Q-function as $Q_{i}^\pt(s,a):=Q^\pt(r^\pt_i;s,a)$. 
Then, it is clear that $r^\pt = 1/n\times \sum_{i\in \mcN}r^\pt_i$ and $Q_{f}^\pt(s,a)  = 1/n\times \sum_{i\in \mcN} Q_{i}^\pt(s,a)$, and the gradient of $F(\theta)$ with respect to agent $i$'s local parameter $\theta_i$ can be written as
\begin{equation}\label{eq:policy_gradient_dec}
\nabla_{\theta_i} F(\theta) = \frac{1}{1-\gamma}\mbb{E}_{s\sim d^\pt, a\sim\pt(\cdot\vert s)}\left[\psi_{\theta_i}(a_i\vert s_{\mcNk_i})\cdot\frac{1}{n}\sum_{j\in \mcN} Q^\pt_j(s,a)\right],
\end{equation}
where we use the policy factorization to derive that $\nabla_{\theta_i} \log \pt (a\vert s) = \nabla_{\theta_i}\log \pti (a_i\vert s_{\mcNk_i})=: \psi_{\theta_i} (a_i\vert s)$, and we refer to $\psi_{\theta_i}(\cdot\vert \cdot)$ as the local score function.
Thus, updating the local parameter $\theta_i$ with the gradient \eqref{eq:policy_gradient_dec} requires knowing the global state and action as well as the shadow Q-functions of all agents, which can be inefficient in large networks due to the communication cost.
In the remainder of the section, we show that an accurate gradient estimator can be designed for all agents while only local communications with neighbors are required under some correlation decay assumptions.

\subsection{Spatial Correlation Decay Assumption}
Following \cite{alfano2021dimension}, we assume that a form of correlation decay property holds for the transition probability \cite{georgii2011gibbs,gamarnik2013correlation}. 
\begin{assumption}\label{assump:decay_trans}
For a matrix $M\in \mbb{R}^{n\times n}$ whose $(i,j)$ entry is defined as
\begin{equation}
M_{i j}=\sup_{s_j, a_j,s_j^{\prime}, a_j^{\prime},s_{-j},a_{-j}} \operatorname{TV}\left(\mbb{P}_i\left(\cdot \vert s_j, s_{-j}, a_j, a_{-j}\right), \mbb{P}_i\left(\cdot \vert s_j^{\prime}, s_{-j}, a_j^{\prime}, a_{-j}\right)\right),
\end{equation}
assume that there exists $\beta\geq 0$ such that 
\begin{equation}
\max _{i \in \mcN} \sum_{j \in \mcN} e^{\beta d(i, j)} M_{i j} \leq \rho,
\end{equation}
with $\rho<1/\gamma$, where $\gamma$ is the discount factor.
\end{assumption}
By definition, the element $M_{ij}$ characterizes the maximum level of impact of agent $j$'s state and action on the local transition probability of agent $i$.
Then, Assumption \ref{assump:decay_trans} mainly requires that such impacts decrease exponentially with respect to the distance between agents.
Such a decay is usually typical in engineered systems with large networks, e.g., in wireless communication where the strength of signals decreases exponentially with the distance \cite{tse2005fundamentals,roberts1975aloha}.

\subsection{Truncated Shadow Q-function}
We first introduce the notion of {\it exponential decay} for Q-functions \cite{qu2022scalable}, which is a form of correlation decay property.
\begin{definition}
For $c\geq 0$ and $\phi \in (0,1)$, the $(c,\phi)$-exponential decay property holds if, for every policy $\pi_\theta$, agent $i$, and state-action pairs $(s,a),(s^\prime,a^\prime)\in \mc{S}\times\mc{A}$ with $s_{\mcNk_i}=s^\prime_{\mcNk_i}$, $a_{\mcNk_i}=a^\prime_{\mcNk_i}$, the local shadow Q-function satisfies
\begin{equation}
\left|Q_i^\pt(s,a)-Q_i^\pt(s^\prime,a^\prime)\right|\leq c\phi^{\kappa}.
\end{equation}
\end{definition}
The exponential decay property holds when the dependency of each agent's local shadow Q-function on other agents' states and actions exponentially decreases with respect to their distances. 
Motivated by \cite{qu2022scalable} and \cite{gamarnik2013correlation}, for every $i$, we define $\widehat{Q}^\pt_{i}:\mc{S}_{\mcNk_i}\times \mc{A}_{\mcNk_i}\rightarrow \mbb{R}$ to be agent $i$'s truncated shadow Q-function, depending only on the states and actions of agents in the neighborhood $\mcNk_i$:
\begin{equation}\label{eq:truncated_Q}
\begin{aligned}
\widehat{Q}^\pt_{i}(s_{\mcNk_i},a_{\mcNk_i}):=Q^\pt_{i}(s_{\mcNk_i},\bar{s}_{\mcNk_{-i}},a_{\mcNk_i},\bar{a}_{\mcNk_{-i}}),
\end{aligned}
\end{equation}
for every $(s_{\mcNk_i},a_{\mcNk_i})\in \mc{S}_{\mcNk_i}\times \mc{A}_{\mcNk_i}$,
where $(\bar{s}_{\mcNk_{-i}},\bar{a}_{\mcNk_{-i}})$ is any fixed state-action pair for the agents in $\mcNk_{-i}$.
That is, the estimator $\widehat{Q}^\pt_{i}(s_{\mcNk_i},a_{\mcNk_i})$ can be viewed as an approximate of the true shadow Q-function $Q^\pt_i(s,a)$ by taking arbitrary values for $(s_{\mcNk_{-i}},a_{\mcNk_{-i}})$.
Compared with $Q^\pt_i$, the estimator $\widehat{Q}^\pt_{i}$ depends on much smaller state and action spaces, and it is thus easy to estimate and store.

When the $(c,\phi)$-exponential decay property holds for Q-functions, it can be intuitively understood that the accuracy of this approximation has the order $\mc{O}(\phi^\kappa)$.
The following lemma shows that, when Assumption \ref{assump:decay_trans} holds and the shadow reward is universally bounded, the exponential decay property is satisfied.
We are thus capable of proving that $\widehat{Q}^\pt_{i}$ is a satisfactory approximation of $Q^\pt_i$.

\begin{lemma}\label{lemma:exponential_decay}
Suppose that Assumption \ref{assump:decay_trans} holds and there exists $ M_f>0$ such that $\|\nabla_{\lambda_i}f_i(\lambda_i^\pt)\|_\infty\leq M_f$, $\forall i\in \mc{V}, \theta\in \Theta$. 
Then, \textbf{\textit{(I)}} the $(c_0,\phi_0)$-exponential decay property holds with $(c_0,\phi_0) = \left(\frac{2\gamma \rho M_f}{1-\gamma \rho},e^{-\beta} \right)$,
\textbf{\textit{(II)}} the truncated shadow Q-function satisfies
$\underset{s,a}{\sup} \left|\widehat{Q}^\pt_{i}(s_{\mcNk_i},a_{\mcNk_i})-Q^\pt_i(s,a)\right|\leq c_0\phi_0^\kappa$.
\end{lemma}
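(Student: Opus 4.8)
The plan is to prove \textbf{\textit{(I)}} by a coupling argument over trajectories and then obtain \textbf{\textit{(II)}} as an immediate corollary. Fix a policy $\pt$, an agent $i$, and two global state-action pairs $(s,a),(s',a')$ that agree on the neighborhood $\mcNk_i$. Since the local shadow reward $r^\pt_i$ depends only on agent $i$'s own state-action and satisfies $\|r^\pt_i\|_\infty\le M_f$, I would first telescope the difference of the shadow Q-functions over time:
\begin{equation}
\left|Q^\pt_i(s,a) - Q^\pt_i(s',a')\right| \le \sum_{k=0}^\infty \gamma^k \left| \mbb{E}_{(s,a)}\!\left[r^\pt_i(s_i^k, a_i^k)\right] - \mbb{E}_{(s',a')}\!\left[r^\pt_i(s_i^k, a_i^k)\right]\right|.
\end{equation}
Because $(s,a)$ and $(s',a')$ coincide on $\mcNk_i\ni i$, the initial state-action of agent $i$ is identical in both chains and the $k=0$ term vanishes; for each $k\ge1$ I would couple the two trajectories and bound the $k$-th summand by $2M_f\,\delta_k$, where $\delta_k := \Prob[(s_i^k,a_i^k)\ne(\tilde s_i^k,\tilde a_i^k)]$ is the probability that the coupled chains disagree on agent $i$'s state-action at time $k$.

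The heart of the argument is to show $\delta_k \le e^{-\beta\kappa}\rho^k$. I would build the coupling step by step: given the two configurations at time $k$, couple each local transition $\mbb{P}_{i'}(\cdot\mid\cdot)$ maximally and couple local actions to agree whenever their policy inputs agree. Tracking the per-agent disagreement probabilities as a vector $v^k\in\mbb{R}^n$ with $v^0_j = \indc[d(i,j)>\kappa]$, the definition of $M$ together with a hybrid (one-coordinate-at-a-time) argument yields a linear recursion governed by $M$, so that $\delta_k=v^k_i$ is controlled by $\sum_{j:\,d(i,j)>\kappa}(M^k)_{ij}$. The key estimate is then purely spectral: introducing the weighted matrix $W_{ij}:=e^{\beta d(i,j)}M_{ij}$, Assumption \ref{assump:decay_trans} states that its row sums are at most $\rho$, and the triangle inequality $d(i,l)+d(l,j)\ge d(i,j)$ gives, by induction on $k$, both $e^{\beta d(i,j)}(M^k)_{ij}\le(W^k)_{ij}$ and $\sum_j(W^k)_{ij}\le\rho^k$. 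Combining these, $\sum_{j:\,d(i,j)>\kappa}(M^k)_{ij}\le e^{-\beta\kappa}\sum_j e^{\beta d(i,j)}(M^k)_{ij}\le e^{-\beta\kappa}\rho^k$, exactly the claimed bound on $\delta_k$. Summing the geometric series and using $\rho<1/\gamma$ then gives
\begin{equation}
\left|Q^\pt_i(s,a)-Q^\pt_i(s',a')\right| \le 2M_f\, e^{-\beta\kappa}\sum_{k\ge1}(\gamma\rho)^k = \frac{2\gamma\rho M_f}{1-\gamma\rho}\left(e^{-\beta}\right)^\kappa = c_0\,\phi_0^\kappa,
\end{equation}
which is \textbf{\textit{(I)}}.

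Part \textbf{\textit{(II)}} is then immediate: the truncated function $\widehat{Q}^\pt_i(s_{\mcNk_i},a_{\mcNk_i})=Q^\pt_i(s_{\mcNk_i},\bar s_{\mcNk_{-i}},a_{\mcNk_i},\bar a_{\mcNk_{-i}})$ is the true shadow Q-function evaluated at a configuration that agrees with $(s,a)$ on $\mcNk_i$, so the $(c_0,\phi_0)$-exponential decay from \textbf{\textit{(I)}} bounds their difference by $c_0\phi_0^\kappa$, and taking the supremum over $(s,a)$ preserves this bound. I expect the main obstacle to be the rigorous construction of the coupling and its recursion in the second paragraph: because the localized policy $\pti$ reads the states of the entire $\kappa$-neighborhood $\mcNk_i$, a disagreement at one agent's state can propagate to its neighbors' actions before entering the next transition, so one must verify carefully that this policy-induced spreading is still absorbed into the weighted row-sum bound and does not inflate the per-step contraction factor beyond $\rho$. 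Everything else—the telescoping, the $2M_f$ reward bound, and the geometric summation—is routine once $\delta_k\le e^{-\beta\kappa}\rho^k$ is in hand.
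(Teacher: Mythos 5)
Your part (II) and the outer structure of part (I) --- telescoping the Q-difference, maximal coupling, exponential weighting, geometric summation --- have the right shape, and you are attempting strictly more than the paper does: the paper's entire proof is to observe that the shadow Q-function is an ordinary Q-function with reward bounded by $M_f$, cite \cite{alfano2021dimension} for part (I), and read part (II) off the definition of exponential decay (your last paragraph reproduces that part correctly). The genuine gap is exactly the step you flagged and then asserted away: the claim that the disagreement probabilities obey ``a linear recursion governed by $M$'', giving $\delta_k\le e^{-\beta\kappa}\rho^k$. Set $u^k_j=\Prob[s^k_j\ne\tilde s^k_j]$ and $w^k_j=\Prob[(s^k_j,a^k_j)\ne(\tilde s^k_j,\tilde a^k_j)]$ under your coupling. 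The hybrid argument for the transition does give $u^{k+1}\le Mw^k$ componentwise, but since $\pi^j_{\theta_j}$ reads all of $s_{\mcNk_j}$ and Assumption \ref{assump:decay_trans} places no restriction whatsoever on how sensitively it does so, the only bound available for the actions is the union bound $w^k_j\le\sum_{l\in\mcNk_j}u^k_l$, i.e.\ $w^k\le Au^k$ with $A_{jl}=\indc[d(j,l)\le\kappa]$. The recursion is therefore governed by $MA$, not $M$, and $\delta_k\le((AM)^kw^0)_i$; redoing your weighted row-sum computation for $AM$ via the triangle inequality yields only $\sum_je^{\beta d(i,j)}(AM)_{ij}\le e^{\beta\kappa}\max_l|\mcNk_l|\,\rho$, so your geometric series requires $\gamma e^{\beta\kappa}\max_l|\mcNk_l|\rho<1$, far stronger than the assumed $\gamma\rho<1$.

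Moreover, this is not loose bookkeeping that a cleverer coupling could repair: under Assumption \ref{assump:decay_trans} alone, the bound in part (I) is simply not derivable for the paper's policy class, so no proof of the kind you propose can close the gap without an extra hypothesis. Concretely, on a path graph with agents $0,1,2,\dots$, take $i=0$ and let $(s,a)$, $(s',a')$ differ only at agent $j_0=\kappa+1$. Let agent $j_0$ copy its own state with probability $\rho$ (so $M_{j_0j_0}=\rho$), let agent $1$ play $a_1=s_{j_0}$ (legal, since $d(1,j_0)=\kappa$), let agent $0$'s next state copy $a_1$ with probability $\rho e^{-\beta}$ (so $M_{01}=\rho e^{-\beta}$, and every weighted row sum equals $\rho$), and take the linear utility $f_0(\lambda_0)=M_f\langle\indc[s_0=z],\lambda_0\rangle$. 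A direct computation gives $Q^\pt_0(s,a)-Q^\pt_0(s',a')=M_fe^{-\beta}(\gamma\rho)^2/(1-\gamma\rho)$, which is independent of $\kappa$: the policy lets a disagreement jump $\kappa$ hops per step at no exponential cost, so only a single factor $e^{-\beta}$ is ever paid, and this exceeds $c_0\phi_0^\kappa$ already at $\kappa=2$ once $e^{\beta}>2/(\gamma\rho)$. Your argument is correct precisely when each $\pti$ depends only on agent $i$'s own state (then $A=I$ and the recursion really is governed by $M$); to cover policies on $\kappa$-neighborhoods one must additionally assume a correlation-decay property of the policy itself, which is what condition \eqref{eq:condition_decay} in the paper's final remark supplies, and is what the setting behind the paper's citation of \cite{alfano2021dimension} implicitly controls.
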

Under the bounded gradient assumption, we can treat the shadow Q-functions as standard Q-functions with bounded reward functions.
We refer the reader to \cite{alfano2021dimension} for the proof of part \textbf{\textit{(I)}} in Lemma \ref{lemma:exponential_decay}.
Then, part \textbf{\textit{(II)}} follows directly from the definition of the exponential decay property.
We note that the set of all possible state-action occupancy measures forms a convex polytope in $\mbb{R}^{|\mc{S}|\times |\mc{A}|}$ and is therefore a compact set.
Thus, requiring the existence of $M_f>0$ in Lemma \ref{lemma:exponential_decay} is not a restrictive assumption and it naturally holds if the gradient $\nabla_\lambda f(\lambda)$ is a continuous mapping on the set of occupancy measures.
We additionally remark that a faster rate of the exponential decay property may be proved under extra assumptions, e.g., mixing properties of the underlying Markov chain \cite{qu2022scalable}.

\subsection{Truncated Policy Gradient Estimator}
In this section, we introduce how the exponential decay property can help design scalable algorithms.

As mentioned earlier, the major challenge in employing the exact policy gradient \eqref{eq:policy_gradient_dec} comes from obtaining the global state-action pairs and the local shadow Q-functions of all agents, which may incur high costs in large networks.
Instead, we consider the following truncated policy gradient estimator:
\begin{equation}\label{eq:truncated_grad}
\begin{aligned}
\widehat{g}_{i}(\theta) = \frac{1}{1-\gamma}\mbb{E}_{s\sim d^\pt, a\sim\pt(\cdot\vert s)}\bigg[{\psi}_{\theta_i}(a_i\vert s_{\mcNk_i}) \cdot\frac{1}{n}\sum_{j\in \mcNk_i}\widehat{Q}^\pt_{j}(s_{\mcNk_j},a_{\mcNk_j})
\bigg],
\end{aligned}
\end{equation}
Compared to the true policy gradient \eqref{eq:policy_gradient_dec}, the estimator $\widehat{g}_{i}(\theta)$ replaces the shadow Q-functions with their truncated estimators. Furthermore, it only uses the truncated Q-functions of agents in $\mcNk_i$.
In the next proposition, we evaluate the approximation error of $\widehat{g}_{i}(\theta)$.
\begin{proposition}\label{prop:truncated_eval}
Let Assumption \ref{assump:decay_trans} hold. Suppose that there exist $M_f,M_\psi>0$ such that $\|\nabla_{\lambda_i}f_i(\lambda_i^\pt)\|_\infty\leq M_f$ and
$\|\psi_{\theta_i}(a_i\vert s_{\mcNk_i})\|\leq M_\psi $, $\forall i\in \mcN, (s,a)\in \mc{S}\times \mc{A}, \theta\in \Theta$.
Then, for all $i\in \mcN, \theta\in \Theta$, we have that
\begin{equation}\label{eq:truncated_eval}
\|\widehat{g}_{i}(\theta)-\nabla_{\theta_i} F(\theta)\|\leq 
\frac{c_0\phi_0^\kappa M_\psi}{1-\gamma}.
\end{equation}
\end{proposition}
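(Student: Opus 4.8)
The plan is to control $\widehat{g}_i(\theta) - \nabla_{\theta_i}F(\theta)$ by writing it as a single expectation against the shared score factor $\psi_{\theta_i}(a_i\vert s_{\mcNk_i})$ and splitting the difference of the two inner sums over the agent index $j$ into a near block $j\in\mcNk_i$ and a far block $j\in\mcNk_{-i}$. Subtracting the exact gradient \eqref{eq:policy_gradient_dec} from its truncation \eqref{eq:truncated_grad} and regrouping the sum over $j$, the difference equals
\[
\frac{1}{(1-\gamma)n}\,\mbb{E}\!\left[\psi_{\theta_i}(a_i\vert s_{\mcNk_i})\left(\sum_{j\in\mcNk_i}\big(\widehat{Q}^\pt_j - Q^\pt_j\big) - \sum_{j\in\mcNk_{-i}}Q^\pt_j\right)\right].
\]
By the triangle inequality the task reduces to bounding each of the $n$ per-agent contributions by $M_\psi c_0\phi_0^\kappa$, so that the prefactor $1/n$ and the $|\mcNk_i|+|\mcNk_{-i}|=n$ terms combine into the stated bound $c_0\phi_0^\kappa M_\psi/(1-\gamma)$.

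For the near block I would invoke part \textbf{\textit{(II)}} of Lemma \ref{lemma:exponential_decay} directly, which gives $\sup_{s,a}|\widehat{Q}^\pt_j(s_{\mcNk_j},a_{\mcNk_j}) - Q^\pt_j(s,a)| \leq c_0\phi_0^\kappa$. Bounding the norm of the expectation by the expectation of the norm and using the uniform score bound $\|\psi_{\theta_i}(a_i\vert s_{\mcNk_i})\|\leq M_\psi$ then controls each near contribution by $M_\psi c_0\phi_0^\kappa$.

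The far block is the crux and exploits the zero-mean property of the score function. Since the policy factorizes, $a_i$ is conditionally independent of $a_{-i}$ given $s$, and $\mbb{E}_{a_i\sim\pi^i_{\theta_i}(\cdot\vert s_{\mcNk_i})}\!\left[\psi_{\theta_i}(a_i\vert s_{\mcNk_i})\right] = \sum_{a_i}\nabla_{\theta_i}\pi^i_{\theta_i}(a_i\vert s_{\mcNk_i}) = 0$. Consequently, for any far agent $j\in\mcNk_{-i}$ and any fixed action $\bar a_i$, the baseline $\bar Q^\pt_j$, defined as $Q^\pt_j$ with its $i$-th action coordinate set to $\bar a_i$, does not depend on $a_i$, so its expectation against $\psi_{\theta_i}$ vanishes and $\mbb{E}[\psi_{\theta_i} Q^\pt_j] = \mbb{E}[\psi_{\theta_i}(Q^\pt_j - \bar Q^\pt_j)]$. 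The decisive point is that $j\in\mcNk_{-i}$ forces $i\notin\mcNk_j$, so $Q^\pt_j$ and $\bar Q^\pt_j$ are evaluated at action vectors agreeing on all of $\mcNk_j$; the $(c_0,\phi_0)$-exponential decay property from part \textbf{\textit{(I)}} of Lemma \ref{lemma:exponential_decay} then gives $|Q^\pt_j - \bar Q^\pt_j|\leq c_0\phi_0^\kappa$, so each far contribution is also at most $M_\psi c_0\phi_0^\kappa$.

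Summing the per-agent bounds over all $n$ agents and cancelling the $1/n$ prefactor yields the claim. I expect the main obstacle to be the far block: naively discarding the distant shadow Q-functions leaves a residual of order one that does not decay in $\kappa$, and only the baseline-subtraction step—converting each dropped term into a difference that exponential decay can bound—recovers the $\phi_0^\kappa$ rate. A secondary check is to confirm that fixing $a_i$ leaves $a_{\mcNk_j}$ unchanged for every far $j$, which is precisely what licenses the application of the decay property.
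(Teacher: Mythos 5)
Your proof is correct, and it rests on the same two pillars as the paper's: the zero-mean property of the local score function and the exponential decay of Lemma \ref{lemma:exponential_decay}, with the pivotal observation in both being that $j\in\mcNk_{-i}$ forces $i\notin\mcNk_j$. The difference is in how the telescoping is arranged. The paper writes $\sum_{j\in\mcNk_i}\widehat{Q}^\pt_j-\sum_{j\in\mcN}Q^\pt_j=\sum_{j\in\mcN}\bigl(\widehat{Q}^\pt_j-Q^\pt_j\bigr)-\sum_{j\in\mcNk_{-i}}\widehat{Q}^\pt_j$, i.e., it adds and subtracts the \emph{truncated} Q-functions of the far agents; since $\widehat{Q}^\pt_j$ for far $j$ genuinely does not depend on $a_i$, the zero-mean score makes that entire far term \emph{exactly} zero, and the decay property (only part \textbf{\textit{(II)}}) is then applied uniformly to all $n$ differences. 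You instead keep the natural near/far split $\sum_{j\in\mcNk_i}\bigl(\widehat{Q}^\pt_j-Q^\pt_j\bigr)-\sum_{j\in\mcNk_{-i}}Q^\pt_j$, which leaves the \emph{true} Q-functions of far agents in play; these do depend (weakly) on $a_i$, so you must center each one with a baseline $\bar{Q}^\pt_j$ (fixing $a_i$), use the zero-mean score to trade $Q^\pt_j$ for $Q^\pt_j-\bar{Q}^\pt_j$, and then invoke the raw decay property (part \textbf{\textit{(I)}}) a second time to bound that residual by $c_0\phi_0^\kappa$. Both routes tally $n$ terms of size $M_\psi c_0\phi_0^\kappa$ and give the identical constant. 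The paper's arrangement is more economical — the far block costs nothing and decay is invoked once; yours is slightly more laborious but arguably more informative, since it quantifies that each discarded far contribution is individually $\mc{O}(\phi_0^\kappa)$ rather than exactly removable, which is precisely the "naive truncation fails, centering saves it" point you flag at the end.
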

\begin{proof}
In this proof, we write $\mbb{E}_{s\sim d^\pt,a\sim\pt(\cdot\vert s)}$ simply as $\mbb{E}$. The difference term in \eqref{eq:truncated_eval} can be expanded as 
\begin{equation}\label{eq:truncated_grad_decompose}
\begin{aligned}
\widehat{g}_{i}(\theta)-\nabla_{\theta_i} F(\theta)
&=\dfrac{1}{n(1-\gamma)}{\mbb{E}\left[\psi_{\theta_i}(a_i\vert s_{\mcNk_i})\left(\sum_{j\in \mcNk_i}\widehat{Q}^\pt_{j}(s_{\mcNk_j},a_{\mcNk_j}) - \sum_{j\in \mcN} Q^\pt_j(s,a)\right) \right]}\\
&=\dfrac{1}{n(1-\gamma)} \mbb{E}\left[\psi_{\theta_i}(a_i\vert s_{\mcNk_i})\sum_{j\in \mcN} \left(\widehat{Q}^\pt_{j}(s_{\mcNk_j},a_{\mcNk_j}) - Q^\pt_j(s,a)\right) \right]\\
&\quad-\dfrac{1}{n(1-\gamma)}\mbb{E}\left[\psi_{\theta_i}(a_i\vert s_{\mcNk_i}) \sum_{j\in \mcNk_{-i}}\widehat{Q}^\pt_{j}(s_{\mcNk_j},a_{\mcNk_j}) \right].
\end{aligned}
\end{equation}
Now, we show that the second term above is actually 0. Indeed, for given $s\in \mc{S}$, one can write:
\begin{equation}
\begin{aligned}
&\quad\mbb{E}_{a\sim \pt(\cdot\vert s)}\left[
\psi_{\theta_i}(a_i\vert s_{\mcNk_i}) \sum_{j\in \mcNk_{-i}}\widehat{Q}^\pt_{j}(s_{\mcNk_j},a_{\mcNk_j})
\right]\\
&=\sum_{a}\prod_{k\in \mcN} \pi_{\theta_k}^k(a_k\vert s_{\mcNk_k})\cdot \frac{\nabla_{\theta_i}\pti (a_i\vert s_{\mcNk_i})}{\pti(a_i\vert s_{\mcNk_i})}\cdot \sum_{j\in \mcNk_{-i}}\widehat{Q}^\pt_{j}(s_{\mcNk_j},a_{\mcNk_j})\\
&=\sum_{a} \prod_{k\neq i}\pi_{\theta_k}^k(a_k\vert s_{\mcNk_k})\cdot \nabla_{\theta_i}\pti (a_i\vert s_{\mcNk_i})\cdot \sum_{j\in \mcNk_{-i}}\widehat{Q}^\pt_{j}(s_{\mcNk_j},a_{\mcNk_j})\\
&=\sum_{a_{-i}}\bigg[ \left(\sum_{a_i}\nabla_{\theta_i}\pti (a_i\vert s_{\mcNk_i})\right)\cdot \prod_{k\neq i}\pi_{\theta_k}^k(a_k\vert s_{\mcNk_k})\cdot \sum_{j\in \mcNk_{-i}}\widehat{Q}^\pt_{j}(s_{\mcNk_j},a_{\mcNk_j})\bigg]\\
&=0,
\end{aligned}
\end{equation}
where we expand the expectation and the score function in the first equality. The third equality holds since $j\in \mcNk_{-i}$ implies $i\notin \mcNk_j$, and thus the summation $\sum_{j\in \mcNk_{-i}}\widehat{Q}^\pt_{j}(s_{\mcNk_j},a_{\mcNk_j})$ is irrelevant to $a_i$. 
In the last equality, since $\sum_{a_i}\nabla_{\theta_i}\pti (a_i\vert s_{\mcNk_i}) = \nabla_{\theta_i}\left[\sum_{a_i}\pti (a_i\vert s_{\mcNk_i}) \right] =\nabla_{\theta_i} 1 =0$, we conclude that the whole term is equal to zero.
Therefore, it holds that
\begin{equation}
\begin{aligned}
\|\widehat{g}_{i}(\theta)-\nabla_{\theta_i} F(\theta)\|&=\norm{\dfrac{1}{n(1-\gamma)} \mbb{E}\left[\psi_{\theta_i}(a_i\vert s_{\mcNk_i})\sum_{j\in \mcN} \left(\widehat{Q}^\pt_{j}(s_{\mcNk_j},a_{\mcNk_j}) - Q^\pt_j(s,a)\right) \right]}\\
&\leq \dfrac{1}{n(1-\gamma)}\cdot M_\psi\cdot n \cdot c_0\phi_0^\kappa
\end{aligned}
\end{equation}
where we use Lemma \ref{lemma:exponential_decay} to bound the difference between the truncated shadow Q-functions and true shadow Q-functions.
This completes the proof.
\end{proof}
Proposition \ref{prop:truncated_eval} shows that, the accuracy of the truncated gradient estimator has the order $\mc{O}(\phi_0^\kappa)$, which decreases along with the communication radius $\kappa$.
Thus, it indicates a feasible direction to reduce the communication of agents to their $\kappa$-neighborhoods.

\subsection{Algorithm Design}
In this section, we present our method, Distributed Policy Gradient Algorithm with Shadow Reward, for solving problem \eqref{eq:dec_prob}. The algorithm, summarized in Algorithm \ref{alg:tpg}, consists of the following elements:
\begin{algorithm}[tb]
   \caption{Distributed Policy Gradient Algorithm With Shadow Reward and Localized Policy\label{alg:tpg}}
\begin{algorithmic}[1]
   \STATE {\bfseries Input:} Initial policy $\theta^0$; initial distribution $\xi$; communication radius $\kappa$; step-sizes $\{\eta_\theta^t\}$; batch size $B$; episode length $H$.
   \FOR{iteration $t=0,1,2,\dots$}
   \STATE Sample $B$ trajectories $\tau = \left\{(s^{0}, a^{0}), \cdots, (s^{H-1}, a^{H-1})\right\}$ with length $H$, under policy $\pi_{\theta^t}$, initial distribution $\xi$. Collect them as batch $\mc{B}_t$.
    \STATE Each agent $i$ estimates its local occupancy measure $\lambda^\ptt_i$ through
    \begin{equation}\label{eq:occupancy_estimate}
    \widetilde{\lambda}^t_i = \frac{1}{B} \sum_{\tau \in \mathcal{B}_{t}} \sum_{k=0}^{H-1} \gamma^{k} \cdot \mathbf{e}_i\left(s_{i}^{k}, a_{i}^{k}\right)\in\mbb{R}^{|\mc{S}_i|\times |\mc{A}_i|},
    \end{equation}
    and computes the empirical shadow reward 
    $\widetilde{r}_i^t = \nabla_{\lambda_i}f_i(\widetilde{\lambda}^t_i)$.
    \STATE Each agent $i$ communicates with its neighborhood $\mcNk_i$ and estimate the truncated Q-function under $\widetilde{r}_i^t$, denoted as $\widetilde{Q}^t_{i}$.
    \STATE Each agent $i$ shares $\widetilde{Q}^t_{i}$ with its neighborhood $\mcNk_i$ and estimates the truncated policy gradient through
    \begin{equation}\label{eq:grad_estimate}
    \begin{aligned}
    \widetilde{g}_{i}^t =\frac{1}{B}\sum_{\tau\in \mc{B}_t} \bigg[\sum_{k=0}^{H-1}\gamma^k {\psi}_{\theta_i^t}(a_i^k\vert s_{\mcNk_i}^k)\cdot \frac{1}{n} \sum_{j\in \mcNk_i} \widetilde{Q}^t_{i}(s_{\mcNk_j}^k,a_{\mcNk_j}^k)\bigg].
    \end{aligned}
    \end{equation}
    \STATE Each agent $i$ updates the policy through
    \begin{equation}\label{eq:policy_update}
    \theta_i^{t+1} = \theta_i^{t}+\eta_\theta^t\cdot  \widetilde{g}_{i}^t.
    \end{equation}
   \ENDFOR
\end{algorithmic}
\end{algorithm}

\paragraph{Shadow Reward Estimation (lines 3-4)} In the beginning of each iteration $t$, the current policy is simulated to generate a batch of $B$ trajectories with length $H$. 
Since the local policy $\pti(\cdot\vert s_{\mcNk_i})$ of each agent $i$ only depends on the states of $\mcNk_i$, the process of trajectory sampling is comply with the communication requirement.
Then, using local state-action information, each agent $i$ forms an estimation $\widetilde{\lambda}^t_i$ for its local occupancy measure through \eqref{eq:occupancy_estimate}, where we define $\mathbf{e}_i\left(s_i,a_i\right)\in \mbb{R}^{|\mc{S}_i|\times|\mc{A}_i|}$ as a vector with its $(s_i,a_i)$-th entry equal to one and other entries equal to zero.
Finally, the empirical shadow reward is computed via $\widetilde{r}_i^t = \nabla_{\lambda_i}f_i(\widetilde{\lambda}^t_i)$.

\paragraph{Truncated Shadow Q-function Estimation (line 5)} 
In the next stage, each agent $i$ takes $\widetilde{r}_i^t$ as their reward function (pretending that to be the true shadow reward) and communicates with its neighborhood $\mcNk_i$ to estimate the truncated shadow Q-function $ \widehat{Q}^t_{i}$.
We do not specify the estimation process and allow the use of any existing approach for Q-function evaluation as long as it satisfies the error bound required for the theoretical analysis in Section \ref{sec:convergence} (see Assumption \ref{assump:Q_estimation}). For example, one can use the Temporal difference (TD) learning \cite{sutton1988learning}, which is a model-free method for estimating the Q-function.
In TD-learning, all agents iteratively update their estimations along a common trajectory $\tau = \left\{(s^{0}, a^{0}), \cdots, (s^{H-1}, a^{H-1})\right\}$ under policy $\ptt$. For every new global state-action pair $(s^k,a^k)$, the TD-learning updates the current estimation $\widetilde{Q}^t_{i}$ through
\begin{equation}\label{eq:td_learning}
\begin{aligned}
\widetilde{Q}^t_{i}(s^{k-1}_{\mcNk_i},a^{k-1}_{\mcNk_i}) &\leftarrow (1-\eta_Q^{k-1})\widetilde{Q}^t_{i}(s^{k-1}_{\mcNk_i},a^{k-1}_{\mcNk_i}) + \eta_Q^{k-1}\big[\widetilde{r}_i^t(s_i^{k-1},a_i^{k-1})+\gamma \widetilde{Q}^t_{i}(s^{k}_{\mcNk_i},a^{k}_{\mcNk_i}) \big],\\
\widetilde{Q}^t_{i}(s_{\mcNk_i},a_{\mcNk_i}) &\leftarrow  \widetilde{Q}^t_{i}(s_{\mcNk_i},a_{\mcNk_i}), \text{ for }(s_{\mcNk_i},a_{\mcNk_i})\neq (s^{k-1}_{\mcNk_i},a^{k-1}_{\mcNk_i}),
\end{aligned}
\end{equation}
where $\{\eta_Q^k\}$ are the learning step-sizes. As shown in \cite[Theorem 5]{qu2022scalable}, the above procedure exhibits an error rate of $\mc{O}(1/\sqrt{H})$ under a local exploration assumption. Together with the error induced by the empirical shadow reward, this implies $\|\widetilde{Q}^t_{i}- \widehat{Q}^t_{i}\|_\infty = \mc{O}(1/\sqrt{H} + \|\widetilde{r}_i^t-{r}_i^t\|_\infty)$.
Besides the TD-learning, one can also deploy other model-free or model-based estimators depending on the sampling mechanisms, e.g., \cite{li2020sample,gheshlaghi2013minimax}. 

\paragraph{Truncated Policy Gradient Estimation and Policy Update (lines 6-7)}
At the final stage, every agent $i$ exchanges their estimation $\widetilde{Q}^t_{i}$ with the neighborhood $\mcNk_i$ and evaluates the truncated policy gradient \eqref{eq:truncated_grad} through \eqref{eq:grad_estimate}.
The new policy is obtained by performing a policy gradient ascent with the estimated gradient $\widetilde{g}_{i}^t$.

\begin{remark}
In contrast to a major line of MARL research, e.g., \cite{zhang2022multi,zeng2022learning}, full observability is not required for executing Algorithm \ref{alg:tpg}, i.e., the agents do not need have access to the global information, including the global state and action. Instead, for the specified communication radius $\kappa$, each agent $i$ needs to communicate with its neighborhood $\mcNk_i$ to sample trajectories, estimate its local shadow Q-function, and estimate its truncated policy gradient.
\end{remark}

\section{CONVERGENCE ANALYSIS}\label{sec:convergence}
In this section, we analyze the convergence behavior of Algorithm \ref{alg:tpg}.
We first summarize the additional technical assumptions required, among which some have appeared in the previous section.
\begin{assumption}\label{assump:utility}
Let $\Lambda$ be the set of all possible occupancy measures $\lambda$.
The utility function $f(\cdot)$ satisfies: \textbf{\textit{(I)}} $\exists M_f>0$ such that $\|\nabla_{\lambda_i} f_i(\lambda_i)\|_\infty\leq M_f$, $\forall i\in \mcN$ and $ \lambda\in \Lambda$.
\textbf{\textit{(II)}} $\exists L_\lambda$ such that $\|\nabla_{\lambda_i}f_i(\lambda_i) -\nabla_{\lambda_i}f_i(\lambda_i^\prime)\|_\infty \leq L_\lambda\|\lambda_i-\lambda_i^\prime\|$, $\forall i\in \mcN$ and $ \lambda, \lambda^\prime\in \Lambda$.
\end{assumption}

\begin{assumption}\label{assump:policy}
The parameterized policy $\pi_\theta$ and the associated occupancy measure $\lambda^\pt$ satisfy:
\textbf{\textit{(I)}} $\exists M_\psi>0$ such that the score function $\|\psi_{\theta_i}(a_i\vert s_{\mcNk_i})\|\leq M_\psi $, $\forall i\in\mcN$, $(s,a)\in \mc{S}\times\mc{A}$, $\theta\in \Theta$. 
\textbf{\textit{(II)}} $\exists L_\theta>0$ such that the utility function $F(\theta)=f(\lambda^\pt)$ is $L_\theta$-smooth with respect to $\theta$.
\end{assumption}
Besides the bounded gradient and the bounded score function assumptions, we additionally assume that the utility function $f_i(\lambda_i^\pt)$ is smooth with respect to both the occupancy measure $\lambda_i$ and the policy $\theta$.
These assumptions are standard in the literature of reinforcement learning with general utilities \cite{hazan2019provably,zhang2022multi, zhang2021convergence,ying2023policy}.

As discussed in Section \ref{sec:alg}, we do not specify the estimation process for the truncated shadow Q-functions.
Instead, we assume that an oracle is used, which produces a bounded-error approximation to the true function.
Let $\widehat{Q}^\pt_{r_i}(\cdot,\cdot)\in \mbb{R}^{|\mc{S}_{\mcNk_i}|\times|\mc{A}_{\mcNk_i}|}$ be the $\kappa$-truncated local Q-function under reward $r_i\in \mbb{R}^{|\mc{S}_i|\times |\mc{A}_i|}$ for agent $i$.
\begin{assumption}\label{assump:Q_estimation}
For every $i\in \mcN$ and $\theta\in \Theta$, an approximation $\widetilde{Q}^\pt_{r_i}(\cdot,\cdot)$ can be computed for $\widehat{Q}^\pt_{r_i}(\cdot,\cdot)$ such that
\begin{equation}\label{eq:Q_estimation}
\sup_{s_{\mcNk_i},a_{\mcNk_i}} \left|\widetilde{Q}^\pt_{r_i}(s_{\mcNk_i},a_{\mcNk_i})-\widehat{Q}^\pt_{r_i}(s_{\mcNk_i},a_{\mcNk_i})\right|\leq \epsilon_0\|r_i\|_\infty,
\end{equation}
where $\epsilon_0>0$ is the approximation error.
\end{assumption}
Under Assumption \ref{assump:Q_estimation}, we have that the estimator $\widetilde{Q}^t_{i}$ in line 5 of Algorithm \ref{alg:tpg} satisfies $\|\widetilde{Q}^t_{i} - \widehat{Q}^\ptt_{\widetilde r_i^t}\|_\infty \leq \epsilon_0 \|\widetilde r_i^t\|_\infty$.
This can be achieved, for example, with $\mc{O}(1/(\epsilon_0)^2)$ samples by the TD-learning procedure \eqref{eq:td_learning}.

Before analyzing the convergence of Algorithm \ref{alg:tpg}, we first present a few auxiliary results, which evaluate the estimators $\widetilde{\lambda}^t_i$, $\widetilde{r}_i^t$, $\widetilde{Q}^t_{i}$, and $\widetilde{g}_{i}^t$.
\begin{proposition}\label{prop:error}
Let $\delta_0 \in (0,1/(2n))$ be the failure probability.
Under Assumptions \ref{assump:utility}-\ref{assump:Q_estimation}, it holds for every period $t\geq 0$ that 
\begin{enumerate}
    \item[\textbf{\textit{(I)}}] for each agent $i\in \mcN$, with probability $1-\delta_0$
\begin{equation}\label{eq:error_r}
\|\widetilde{\lambda}^t_i - \lambda^\ptt_i\|\leq \epsilon_1(\delta_0),\ \| \widetilde{r}_i^t - r_i^t\|_\infty\leq L_\lambda \epsilon_1(\delta_0).
\end{equation}
\item[\textbf{\textit{(II)}}] with probability $1-n\delta_0$
 \begin{equation}\label{eq:error_q}
\|\widetilde{Q}^t_{i}-\widehat{Q}^\ptt_{i} \|_\infty \leq \epsilon_0M_f + \frac{L_\lambda \epsilon_1(\delta_0)}{1-\gamma},\ \forall i\in \mcN.
 \end{equation}
 \item[\textbf{\textit{(III)}}] with probability $1-2n\delta_0$
\begin{equation}\label{eq:error_grad}
\|\widetilde{g}_{i}^t- \widehat{g}_{i}(\theta^t)\|\leq \epsilon_{2,i}(\delta_0),\ \forall i\in \mcN,
\end{equation}
where 
\begin{align}\label{subeq:epsilon_1}
\epsilon_1(\delta_0) = \sqrt{\frac{4+2\gamma^{2H}B-16\log\delta_0}{(1-\gamma)^2B}},\quad 
\epsilon_{2,i}(\delta_0) =  \frac{|\mcNk_i|}{n}\mc{O}\left(\epsilon_0+\sqrt{\frac{\log(1/\delta_0)}{B}}+\gamma^H\right).
\end{align}
\end{enumerate}
\end{proposition}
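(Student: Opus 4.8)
The plan is to establish the three bounds in sequence, since each rests on the previous one, and to drive all of them with a single engine: a bias--variance decomposition of an i.i.d.\ trajectory average, combined with a vector-valued concentration inequality and the linearity of the Q-function in its reward argument. Throughout, the ``bias'' will be the deterministic error from truncating the infinite horizon at $H$ (producing $\gamma^H$-type terms) and the ``variance'' the sampling fluctuation over the $B$ trajectories (producing $\sqrt{\log(1/\delta_0)/B}$-type terms).

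For part \textbf{\textit{(I)}}, I would split $\widetilde{\lambda}^t_i-\lambda^\ptt_i=(\widetilde{\lambda}^t_i-\mathbb{E}[\widetilde{\lambda}^t_i])+(\mathbb{E}[\widetilde{\lambda}^t_i]-\lambda^\ptt_i)$. The deterministic term is exactly the discounted tail $\sum_{k\geq H}\gamma^k\,\mathbb{P}(s^k_i=\cdot,a^k_i=\cdot)$, whose $\ell_2$-norm is at most its $\ell_1$-norm $\gamma^H/(1-\gamma)$; its square supplies the $\gamma^{2H}$ contribution inside $\epsilon_1$. The random term is the centered i.i.d.\ average $\tfrac1B\sum_\tau(X_\tau-\mathbb{E}X_\tau)$ of the vectors $X_\tau=\sum_{k=0}^{H-1}\gamma^k\mathbf{e}_i(s^k_i,a^k_i)$ with $\|X_\tau\|\leq\sum_k\gamma^k\leq 1/(1-\gamma)$; bounding $\mathbb{E}\|\cdot\|$ by $\sqrt{\mathrm{Var}/B}$ and adding a McDiarmid-type deviation on the norm gives an order-$(1+\sqrt{\log(1/\delta_0)})/((1-\gamma)\sqrt B)$ bound, whose square yields the $(4-16\log\delta_0)/((1-\gamma)^2 B)$ piece. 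Assembling the two through $(a+b)^2\leq 2a^2+2b^2$ recovers $\epsilon_1(\delta_0)$, and the reward bound follows at once from $\widetilde{r}_i^t-r_i^t=\nabla_{\lambda_i}f_i(\widetilde{\lambda}^t_i)-\nabla_{\lambda_i}f_i(\lambda^\ptt_i)$ and the $L_\lambda$-Lipschitz property in Assumption \ref{assump:utility}\textbf{\textit{(II)}}.

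For part \textbf{\textit{(II)}} I would use that $\widehat{Q}^\ptt_{r}$ is linear in $r$. Inserting the estimate $\widehat{Q}^\ptt_{\widetilde{r}_i^t}$ under the empirical reward, the triangle inequality splits the error into the oracle term $\|\widetilde{Q}^t_i-\widehat{Q}^\ptt_{\widetilde{r}_i^t}\|_\infty\leq\epsilon_0\|\widetilde{r}_i^t\|_\infty\leq\epsilon_0 M_f$ (Assumption \ref{assump:Q_estimation} and the bound in Assumption \ref{assump:utility}\textbf{\textit{(I)}}) and the reward-perturbation term $\|\widehat{Q}^\ptt_{\widetilde{r}_i^t}-\widehat{Q}^\ptt_{r_i^t}\|_\infty=\|\widehat{Q}^\ptt_{\widetilde{r}_i^t-r_i^t}\|_\infty\leq\|\widetilde{r}_i^t-r_i^t\|_\infty/(1-\gamma)$, where I use $\|Q^\pi(r;\cdot,\cdot)\|_\infty\leq\|r\|_\infty/(1-\gamma)$ and the fact that truncation merely evaluates at a fixed point. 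Bounding the last factor by part \textbf{\textit{(I)}} gives $L_\lambda\epsilon_1(\delta_0)/(1-\gamma)$, and a union bound over the $n$ agents produces the stated $1-n\delta_0$.

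For part \textbf{\textit{(III)}} I would decompose $\widetilde{g}_i^t-\widehat{g}_i(\theta^t)$ into a Q-estimation contribution and a Monte-Carlo contribution. Replacing each $\widetilde{Q}^t_j$ by the true truncated $\widehat{Q}^\ptt_j$ in the empirical sum, the first contribution is controlled by $\|\psi_{\theta_i}\|\leq M_\psi$, $\tfrac1B\sum_\tau\sum_{k<H}\gamma^k\leq1/(1-\gamma)$, and the per-agent bound of part \textbf{\textit{(II)}}, yielding a factor $\tfrac{|\mcNk_i|}{n}(\epsilon_0 M_f+L_\lambda\epsilon_1/(1-\gamma))$. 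The second contribution compares the idealized empirical average (true Q-functions) with $\widehat{g}_i(\theta^t)$; here the policy-gradient-theorem identity shows $\widehat{g}_i(\theta^t)$ is the infinite-horizon expectation, so the truncation bias is the discounted tail, bounded via $\|\widehat{Q}^\ptt_j\|_\infty\leq M_f/(1-\gamma)$ to order $\mathcal{O}(\gamma^H)$, while the same vector concentration as in part \textbf{\textit{(I)}} applied to the bounded vectors inside the sum gives the $\mathcal{O}(\sqrt{\log(1/\delta_0)/B})$ term, both carrying the $|\mcNk_i|/n$ prefactor. Summing the three orders reproduces $\epsilon_{2,i}(\delta_0)$, and intersecting the $1-n\delta_0$ event of part \textbf{\textit{(II)}} with $n$ fresh concentration events gives $1-2n\delta_0$. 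I expect the main obstacle to be the vector-valued concentration step together with the careful bookkeeping of the $(1-\gamma)$ and $|\mcNk_i|/n$ factors and the clean separation of horizon-truncation bias from sampling variance in parts \textbf{\textit{(I)}} and \textbf{\textit{(III)}}; the linearity-in-reward argument of part \textbf{\textit{(II)}} and the failure-probability accounting are comparatively routine.
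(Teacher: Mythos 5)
Your proposal follows essentially the same route as the paper's proof: part \textbf{\textit{(II)}} via the triangle inequality through the intermediate quantity $\widehat{Q}^{\pi_{\theta^t}}_{\widetilde{r}_i^t}$, using linearity of the Q-function in the reward, the oracle error $\epsilon_0\|\widetilde{r}_i^t\|_\infty$, and the $1/(1-\gamma)$ reward-perturbation bound; and part \textbf{\textit{(III)}} via the identical three-term decomposition into the Q-estimation error, the trajectory-sampling concentration term, and the horizon-truncation bias, with the same union-bound accounting yielding $1-2n\delta_0$. The only divergence is part \textbf{\textit{(I)}}, where the paper simply cites \cite{zhang2022multi} while you give the bias--variance argument directly; your sketch is sound and in fact recovers the exact form of $\epsilon_1(\delta_0)$, so this is a harmless filling-in rather than a different approach.
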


\begin{proof}
We refer the reader to \cite[Appendix D.1]{zhang2022multi} for the proof of part \textbf{\textit{(I)}}.
For part \textbf{\textit{(II)}}, we first recall that $\widetilde{Q}^t_{i}$ is a sample-based estimation for the truncated local Q-function $\widehat{Q}^\ptt_{\widetilde r_i^t}$.
By combining the error bound \eqref{eq:error_r} with Assumptions \ref{assump:utility}  and \ref{assump:Q_estimation}, we obtain that with probability $1-\delta_0$
\begin{equation}\label{eq:error_q_inter}
\begin{aligned}
\|\widetilde{Q}^t_{i}-\widehat{Q}^\ptt_{i} \|_\infty
&\leq \|\widetilde{Q}^t_{i}-\widehat{Q}^\ptt_{\widetilde r_i^t} \|_\infty + 
\|\widehat{Q}^\ptt_{\widetilde r_i^t}-\widehat{Q}^\ptt_{i} \|_\infty\\
&\leq \epsilon_0\|\widetilde r_i^t\|_\infty + \frac{\| \widetilde{r}_i^t - r_i^t\|_\infty}{1-\gamma}\\
&\leq \epsilon_0M_f + \frac{L_\lambda \epsilon_1(\delta_0)}{1-\gamma}.
\end{aligned}
\end{equation}
By applying the union bound, we have that with probability $1-n\delta_0$, the bound \eqref{eq:error_q_inter} holds for all agents $i\in \mcN$.

For part \textbf{\textit{(III)}}, let $\mc{F}_t$ denote the $\sigma$-algebra generated by all the trajectories in $\mc{B}_t$ sampled at the $t$-th iteration and let
\begin{equation}
\widehat g_i^t := \frac{1}{B}\sum_{\tau\in \mc{B}_t} \bigg[\sum_{k=0}^{H-1}\gamma^k {\psi}_{\theta_i^t}(a_i^k\vert s_{\mcNk_i}^k) \frac{1}{n} \sum_{j\in \mcNk_i} \widehat{Q}^\ptt_{i}(s_{\mcNk_j}^k,a_{\mcNk_j}^k)\bigg],
\end{equation}
which differs from $\widetilde{g}_{i}^t$ only in the Q-function term.
Next, we derive the bound \eqref{eq:error_grad} through the decomposition
\begin{equation}\label{eq:grad_erro_decomposition}
\begin{aligned}
\widetilde{g}_{i}^t- \widehat{g}_{i}(\ptt) = \left(\widetilde{g}_{i}^t- \widehat{g}_{i}^t\right) + \left(\widehat{g}_{i}^t- \mbb{E}\left[\widehat{g}_{i}^t\vert \mc{F}_t\right]\right)+ \left(\mbb{E}\left[\widehat{g}_{i}^t\vert \mc{F}_t\right]- \widehat{g}_{i}(\theta^t)\right).
\end{aligned}
\end{equation}
For the first difference, it holds from part \textbf{\textit{(II)}} and Assumption \ref{assump:policy} that with probability $1-n\delta_0$.
\begin{equation}\label{eq:mg1}
\begin{aligned}
\|\widetilde{g}_{i}^t- \widehat{g}_{i}^t\|^2
&\leq \left[\frac{1}{1-\gamma}\cdot M_\psi\cdot \frac{|\mcNk_i|}{n} \right]^2\left(\epsilon_0M_f + \frac{L_\lambda \epsilon_1(\delta_0)}{1-\gamma}\right)^2\\
&=\frac{|\mcNk_i|^2M_\psi^2}{n^2(1-\gamma)^2}\left(\epsilon_0M_f + \frac{L_\lambda \epsilon_1(\delta_0)}{1-\gamma}\right)^2 =: C_{1,i},
\end{aligned}
\end{equation}
Then, we bound the second term in \eqref{eq:grad_erro_decomposition}.
For a trajectory $\tau$ and $k_1, k_2\geq 0$, we define that
\begin{equation}
G^t_i(\tau_{k_1}^{k_2}) := \sum_{k=k_1}^{k_2}\gamma^k {\psi}_{\theta_i^t}(a_i^k\vert s_{\mcNk_i}^k) \cdot \frac{1}{n} \sum_{j\in \mcNk_i} \widehat{Q}^\ptt_{i}(s_{\mcNk_j}^k,a_{\mcNk_j}^k).  
\end{equation}
It is clear from definition that $\widehat{g}_{i}^t = 1/B\cdot \sum_{\tau\in \mc{B}_t} G^t_i(\tau_0^{H-1})$.
By Assumptions  \ref{assump:utility} and \ref{assump:policy}, it holds that
\begin{equation}
\begin{aligned}
\mbb{E}\left[ \|G^t_i(\tau_0^{H-1})\|^2\vert \mc{F}_t \right]\leq 
\frac{|\mcNk_i|^2M_f^2M_\psi^2}{n^2(1-\gamma)^4} 
\end{aligned}
\end{equation}
Thus, by \cite[Lemma 18]{kohler2017sub}, we have that with probability $1-\delta_0$
\begin{equation}\label{eq:mg2}
\left\|\widehat{g}_{i}^t - \mbb{E}\left[\widehat{g}_{i}^t\vert \mc{F}_t\right]\right\|^2 \leq \frac{(2-8\log \delta_0)|\mcNk_i|^2M_f^2M_\psi^2}{n^2(1-\gamma)^4B} =:C_{2,i}.
\end{equation}
Finally, to bound the third term in \eqref{eq:grad_erro_decomposition}, we derive that
\begin{equation}\label{eq:mg3}
{
\begin{aligned}
\|\mbb{E}\left[\widehat{g}_{i}^t\vert \mc{F}_t\right]- \widehat{g}_{i}(\theta^t)\|^2
&=\left\| \mbb{E} \left[G^t_i(\tau_0^{H-1})\vert \ptt,\xi\right] - \widehat{g}_{i}(\theta^t)\right\|^2\\
&=\left\| \mbb{E} \left[G^t_i(\tau_0^\infty)\vert \ptt,\xi\right] - \widehat{g}_{i}(\theta^t) - \mbb{E} \left[G^t_i(\tau_{H}^\infty)\vert \ptt,\xi\right]\right\|^2\\
&=\left\|\mbb{E} \left[G^t_i(\tau_{H}^\infty)\vert \ptt,\xi\right]\right\|^2\\
&\leq \frac{\gamma^{2H}|\mcNk_i|^2M_\psi^2M_f^2}{n^2(1-\gamma)^4} =: C_{3,i}.
\end{aligned}
}
\end{equation}
where we use the fact that $G^t_i(\tau_0^\infty)$ is an unbiased estimator for $\widehat{g}_{i}(\theta^t)$ in the third equality.
The inequality in the last line follows from Assumptions \ref{assump:utility} and \ref{assump:policy}.

Putting \eqref{eq:grad_erro_decomposition}, \eqref{eq:mg1}, \eqref{eq:mg2}, \eqref{eq:mg3} together,
we have that
\begin{equation}\label{eq:mg_together}
\begin{aligned}
\|\widetilde{g}_{i}^t- \widehat{g}_{i}(\theta^t)\|^2 &\leq 3\bigg[\|\widetilde{g}_{i}^t- \widehat{g}_{i}^t\|^2 + \|\widehat{g}_{i}^t- \mbb{E}\left[\widehat{g}_{i}^t\vert \mc{F}_t\right]\|^2+\|\mbb{E}\left[\widehat{g}_{i}^t\vert \mc{F}_t\right]- \widehat{g}_{i}(\theta^t)\|^2\bigg]\\
&\leq 3(C_{1,i}+C_{2,i}+C_{3,i})\\ 
&=\frac{|\mcNk_i|^2}{n^2}\mc{O}\left(\epsilon_0^2+\epsilon_1(\delta_0)^2+\frac{\log(1/\delta_0)}{B}+\gamma^{2H}\right)\\
&=\frac{|\mcNk_i|^2}{n^2}\mc{O}\left(\epsilon_0^2+\frac{\log(1/\delta_0)}{B}+\gamma^{2H}\right),
\end{aligned}
\end{equation}
where we use the definition of $\epsilon_1(\delta_0)$ in \eqref{subeq:epsilon_1}.
Finally, we note that, by the union bound, \eqref{eq:mg2} holds for all agents $i\in \mcN$ with probability $1-n\delta_0$.
Since \eqref{eq:mg1} holds with probability $1-n\delta_0$ and \eqref{eq:mg3} is deterministic, we conclude that, with probability $1-2n\delta_0$, the bound \eqref{eq:mg_together} holds for all agents.
The proof is completed by taking $\epsilon_{2,i}(\delta_0) := \sqrt{3(C_{1,i}+C_{2,i}+C_{3,i})}$.
\end{proof}

Proposition \ref{prop:error} evaluates the accuracy of the estimation for the truncated policy gradient.
Together with Proposition \ref{prop:truncated_eval}, this provides a probabilistic upper bound for the gradient estimation error $\|\widetilde g^t_i-\nabla_{\theta_i}F(\theta^t)\|$, which we will use to prove the convergence of Algorithm \ref{alg:tpg} in the following theorem.
\begin{theorem}\label{thm:1}
Suppose that Assumptions \ref{assump:decay_trans}-\ref{assump:Q_estimation} hold and the step-sizes satisfy $\eta_\theta^t\leq 1/(4L_\theta)$, $\forall t\geq 0$.
For every $T>0$, let $\delta_0 = \delta/(2nT)$, where $\delta\in(0,1)$ is the failure probability.
Then, with probability $1-\delta$, it holds that
\begin{equation}\label{eq:iteration_complexity}
\frac{\sum_{t=0}^{T-1} \eta_\theta^t\left\|\nabla_{\theta} F(\theta^t)\right\|^2}{\sum_{t=0}^{T-1}\eta_\theta^t}\leq \frac{4\left(F(\theta^T)-F(\theta^0)\right)}{\sum_{t=0}^{T-1}\eta_\theta^t}+3\Delta(\delta_0),
\end{equation}
where
\begin{equation}\label{eq:Delta_mag}
\Delta(\delta_0) =\mc{O}(n\phi_0^{2\kappa})+ \sum_{i\in \mcN}\frac{|\mcNk_i|^2}{n^2}\mc{O}\left(\epsilon_0^2+\frac{\log(1/\delta_0)}{B}+\gamma^{2H} \right).
\end{equation}
\end{theorem}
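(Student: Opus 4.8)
The plan is to run the textbook smooth-nonconvex ascent analysis with the inexact gradient $\widetilde g^t=(\widetilde g_1^t,\dots,\widetilde g_n^t)$ replacing the true gradient, and to absorb the aggregate estimation error into $\Delta(\delta_0)$ using the two bounds already proved. First I would invoke the $L_\theta$-smoothness of $F$ from Assumption \ref{assump:policy}: since the update \eqref{eq:policy_update} is the ascent step $\theta^{t+1}=\theta^t+\eta_\theta^t\widetilde g^t$, the descent lemma gives
\[
F(\theta^{t+1})\ \geq\ F(\theta^t)+\eta_\theta^t\langle \nabla_\theta F(\theta^t),\widetilde g^t\rangle-\tfrac{L_\theta(\eta_\theta^t)^2}{2}\|\widetilde g^t\|^2 .
\]
Writing $e^t:=\widetilde g^t-\nabla_\theta F(\theta^t)$, I would expand $\langle\nabla F,\widetilde g^t\rangle=\|\nabla F\|^2+\langle\nabla F,e^t\rangle$, bound $\|\widetilde g^t\|^2\le 2\|\nabla F\|^2+2\|e^t\|^2$ and $\langle\nabla F,e^t\rangle\ge-\tfrac12\|\nabla F\|^2-\tfrac12\|e^t\|^2$, and use the step-size restriction $\eta_\theta^t\le 1/(4L_\theta)$ (hence $L_\theta(\eta_\theta^t)^2\le\eta_\theta^t/4$). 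These collapse to the per-iteration inequality
\[
\tfrac{\eta_\theta^t}{4}\|\nabla_\theta F(\theta^t)\|^2\ \leq\ F(\theta^{t+1})-F(\theta^t)+\tfrac{3\eta_\theta^t}{4}\|e^t\|^2,
\]
and it is precisely the surviving constants $1/4$ and $3/4$ here that generate the factors $4$ and $3$ in \eqref{eq:iteration_complexity} after summing and dividing by $\tfrac14\sum_t\eta_\theta^t$.

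Second I would establish a uniform bound $\|e^t\|^2\le\Delta(\delta_0)$ on the per-iteration error. Splitting each block by the triangle inequality,
\[
\|\widetilde g_i^t-\nabla_{\theta_i}F(\theta^t)\|\ \le\ \|\widetilde g_i^t-\widehat g_i(\theta^t)\|+\|\widehat g_i(\theta^t)-\nabla_{\theta_i}F(\theta^t)\|,
\]
I would control the first summand by $\epsilon_{2,i}(\delta_0)$ through Proposition \ref{prop:error}\textbf{\textit{(III)}} and the second by the deterministic truncation bias $c_0\phi_0^\kappa M_\psi/(1-\gamma)$ through Proposition \ref{prop:truncated_eval}. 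Applying $(a+b)^2\le 2a^2+2b^2$ and summing $\|e^t\|^2=\sum_{i\in\mcN}\|\widetilde g_i^t-\nabla_{\theta_i}F(\theta^t)\|^2$ over agents yields $2n\big(c_0\phi_0^\kappa M_\psi/(1-\gamma)\big)^2+2\sum_{i\in\mcN}\epsilon_{2,i}(\delta_0)^2$, which matches $\mc O(n\phi_0^{2\kappa})+\sum_{i\in\mcN}\tfrac{|\mcNk_i|^2}{n^2}\mc O\big(\epsilon_0^2+\log(1/\delta_0)/B+\gamma^{2H}\big)=\Delta(\delta_0)$ once the explicit form of $\epsilon_{2,i}(\delta_0)$ is substituted.

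The last step is the high-probability bookkeeping, which I expect to be the only delicate point. Proposition \ref{prop:error}\textbf{\textit{(III)}} certifies $\|\widetilde g_i^t-\widehat g_i(\theta^t)\|\le\epsilon_{2,i}(\delta_0)$ for all $i$ only with probability $1-2n\delta_0$ at a single iteration, over the randomness of that iteration's $B$ trajectories and Q-estimates. To force $\|e^t\|^2\le\Delta(\delta_0)$ at every $t\in\{0,\dots,T-1\}$ simultaneously I would take a union bound over the $T$ iterations: the total failure probability is at most $\sum_{t=0}^{T-1}2n\delta_0=2nT\delta_0$, so the prescribed choice $\delta_0=\delta/(2nT)$ makes the favorable event hold at all iterations with probability at least $1-\delta$. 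On this event I would sum the per-iteration inequality for $t=0,\dots,T-1$, telescope $F(\theta^{t+1})-F(\theta^t)$ into $F(\theta^T)-F(\theta^0)$, bound $\sum_t\eta_\theta^t\|e^t\|^2\le\Delta(\delta_0)\sum_t\eta_\theta^t$, and divide by $\tfrac14\sum_t\eta_\theta^t$ to recover \eqref{eq:iteration_complexity}. The subtle point worth stating carefully is that each iteration's favorable event should be taken conditionally on the history $\mc F_{t-1}$, but since $\Prob(\cup_t A_t)\le\sum_t\Prob(A_t)$ requires no independence, the union bound — and hence the whole argument — goes through regardless of the dependence across iterations.
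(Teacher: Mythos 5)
Your proposal is correct and follows essentially the same route as the paper's own proof: the same smoothness-based ascent inequality with the inexact gradient (yielding the identical per-iteration bound $\tfrac{\eta_\theta^t}{4}\|\nabla_\theta F(\theta^t)\|^2 \leq F(\theta^{t+1})-F(\theta^t)+\tfrac{3\eta_\theta^t}{4}\|e^t\|^2$), the same split of the error into the truncation bias from Proposition \ref{prop:truncated_eval} and the estimation error from Proposition \ref{prop:error}, and the same union bound over $T$ iterations with $\delta_0=\delta/(2nT)$ followed by telescoping. The only differences are cosmetic (you work with the concatenated vector $e^t$ rather than block-by-block sums), and your explicit handling of the per-iteration failure probability $2n\delta_0$ and the remark that the union bound needs no independence across iterations are, if anything, slightly more careful than the paper's write-up.
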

\begin{proof}
By the smoothness of $F(\theta)$ (Assumption \ref{assump:policy}), when the step-size satisfies $\eta_\theta^t \leq 1/(4L_\theta)$,
the policy update \eqref{eq:policy_update} implies
\begin{equation}\label{eq:ascent}
\begin{aligned}
F(\theta^{t+1}) - F(\theta^t)
&\geq \sum_{i\in \mcN}\left[\inner{\nabla_{\theta_i} F(\theta^t)}{\eta_\theta^t \widetilde{g}_{i}^t}-\frac{L_\theta}{2}\|\eta_\theta^t  \widetilde{g}_{i}^t\|^2\right]\\
&= \sum_{i\in \mcN} \bigg[ \eta_\theta^t\inner{\nabla_{\theta_i} F(\theta^t)}{\nabla_{\theta_i} F(\theta^t)-(\nabla_{\theta_i} F(\theta^t)-\widetilde{g}_{i}^t)}\\
&\quad-\frac{L_\theta}{2}(\eta_\theta^t)^2\left\|\nabla_{\theta_i} F(\theta^t)-(\nabla_{\theta_i} F(\theta^t)-\widetilde{g}_{i}^t)\right\|^2\bigg]\\
&\geq  \sum_{i\in \mcN} \bigg[\eta_\theta^t\left\|\nabla_{\theta_i} F(\theta^t)\right\|^2 - \frac{ \eta_\theta^t}{2}\left(\left\|\nabla_{\theta_i} F(\theta^t)\right\|^2 + \left\|\nabla_{\theta_i} F(\theta^t)-\widetilde{g}_{i}^t\right\|^2\right)\\
&\quad-L_\theta (\eta_\theta^t)^2 \left(\left\|\nabla_{\theta_i} F(\theta^t)\right\|^2 + \left\|\nabla_{\theta_i} F(\theta^t)-\widetilde{g}_{i}^t\right\|^2 \right)\bigg]\\
\end{aligned}
\end{equation}
where we apply the basic inequality $2\inner{a}{b}\leq \|a+b\|^2/2\leq \|a\|^2+\|b\|^2 $ in the last inequality.
By rearranging the terms in \eqref{eq:ascent} and using the condition $\eta_\theta^t \leq 1/(4L_\theta)$, we have that
\begin{equation}
F(\theta^{t+1}) - F(\theta^t)\geq\frac{\eta_\theta^t}{4}\left\|\nabla_{\theta} F(\theta^t)\right\|^2-\frac{3\eta_\theta^t}{4}\sum_{i\in \mcN}\left\|\nabla_{\theta_i} F(\theta^t)-\widetilde{g}_{i}^t\right\|^2,
\end{equation}
which further implies that
\begin{equation}\label{eq:to_sum_up}
\eta_\theta^t\left\|\nabla_{\theta} F(\theta^t)\right\|^2\leq 4\left(F(\theta^{t+1}) - F(\theta^t)\right) + 3\eta_\theta^t\sum_{i\in \mcN}\left\|\nabla_{\theta_i} F(\theta^t)-\widetilde{g}_{i}^t\right\|^2.
\end{equation}
By Propositions \ref{prop:truncated_eval} and \ref{prop:error}, with probability $1-\delta_0$, it holds that
\begin{equation}\label{eq:Delta}
\begin{aligned}
\sum_{i\in \mcN}\left\|\nabla_{\theta_i} F(\theta^t)-\widetilde{g}_{i}^t\right\|^2
&\leq \sum_{i\in \mcN} 2\left(\|\nabla_{\theta_i} F(\theta^t)-\widehat{g}_{i}(\theta^t)\|^2+\| \widehat{g}_{i}(\theta^t)-\widetilde{g}_{i}^t\|^2  \right)\\
&=2\sum_{i\in N} \left[\left(\frac{c_0\phi_0^\kappa M_\psi}{1-\gamma}\right)^2+ \left(\epsilon_{2,i}(\delta_0)\right)^2\right]\\
&=:\Delta(\delta_0).
\end{aligned}
\end{equation}
The relation \eqref{eq:Delta_mag} follows directly from the definition of $\epsilon_{2,i}$ in Proposition \ref{prop:error}, 
Applying the union bound again, we have that \eqref{eq:Delta} holds for all $ t=0,1,\dots, T-1$ with probability $1-\delta$, where $\delta = (2nT)\delta_0$.
Thus, by substituting \eqref{eq:Delta} into \eqref{eq:to_sum_up} and summing over $ t=0,1,\dots, T-1$, we conclude that with probability $1-\delta$
\begin{equation}
\begin{aligned}
\frac{\sum_{t=0}^{T-1} \eta_\theta^t\left\|\nabla_{\theta} F(\theta^t)\right\|^2}{\sum_{t=0}^{T-1}\eta_\theta^t}
&\leq \frac{\sum_{t=0}^{T-1}4\left(F(\theta^{t+1})-F(\theta^t)\right)}{\sum_{t=0}^{T-1}\eta_\theta^t}+\frac{\sum_{t=0}^{T-1}3\eta_\theta^t\Delta(\delta_0)}{\sum_{t=0}^{T-1}\eta_\theta^t}\\
&=\frac{4\left(F(\theta^T)-F(\theta^0)\right)}{\sum_{t=0}^{T-1}\eta_\theta^t}+3\Delta(\delta_0),
\end{aligned}
\end{equation}
which completes the proof.
\end{proof}

Under constant step-sizes $\eta_\theta^t \equiv \eta_\theta$, the bound \eqref{eq:iteration_complexity} becomes
\begin{equation}
\frac{1}{T}\sum_{t=0}^{T-1}\left\|\nabla_{\theta} F(\theta^t)\right\|^2\leq \frac{4\left(F(\theta^T)-F(\theta^0)\right)}{\eta_\theta T} + 3\Delta(\delta_0),
\end{equation}
which implies an $\mc{O}(1/T)$ iteration complexity with the approximation error $3\Delta(\delta_0)$.
As shown in \eqref{eq:Delta_mag}, the constant $\Delta(\delta_0)$ will be small when the rate of spatial correlation decay is fast, the computational error $\epsilon_0$ for Q-functions is small, and enough samples are used to estimate the local occupancy measure.
Notably, when the size of $\kappa$-neighborhood $|\mcNk_i|$ is relatively small for all agents compared to the total number of agents $n$, the term $\sum_{i\in \mcN}{|\mcNk_i|^2}/{n^2}$ approaches $\mc{O}(1/n)$ and $\Delta(\delta_0) = \mc{O}(n\phi_0^{2\kappa}) $ approximately holds.

Suppose that an $\mc{O}(1/(\epsilon_0)^2)$ oracle is used for the truncated Q-function estimation (line 5 in Algorithm \ref{alg:tpg}), i.e., the approximation \eqref{eq:Q_estimation} is achieved with $\mc{O}(1/(\epsilon_0)^2)$ samples.
We analyze the sample complexity of Algorithm \ref{alg:tpg} to compute an $\epsilon$-stationary point.
\begin{theorem}\label{thm:2}
Suppose that Assumptions \ref{assump:decay_trans}-\ref{assump:Q_estimation} hold and an $\mc{O}(1/(\epsilon_0)^2)$ oracle is used for the truncated Q-function estimation.
For every $\epsilon>0$ and $\delta\in (0,1)$, let $T = \mc{O}(\epsilon^{-1})$, $\eta_\theta^t \equiv 1/(4L_\theta)$, $\epsilon_0 = \sqrt{\epsilon}$, $\delta_0 = \delta/(2nT)$, batch size $B=\mc{O}\left(\log(1/\delta_0)\epsilon^{-1}\right)$, episode length $H = \mc{O}\left(\log(1/\epsilon) \right)$.
Then, with probability $1-\delta$, it holds that
\begin{equation}\label{eq:sample_complexity}
\frac{1}{T}\sum_{t=0}^{T-1}\left\|\nabla_{\theta} F(\theta^t)\right\|^2 = \mc{O}\left(\epsilon+ n\phi_0^{2\kappa} \right).
\end{equation}
The total number of samples required is $\widetilde{\mc{O}}(\epsilon^{-2})$.
\end{theorem}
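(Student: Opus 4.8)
The plan is to derive Theorem~\ref{thm:2} as a direct specialization of Theorem~\ref{thm:1} under the prescribed parameter schedule, followed by a bookkeeping of the total sample budget. First I would instantiate the bound \eqref{eq:iteration_complexity} with the constant step-size $\eta_\theta^t \equiv 1/(4L_\theta)$; the denominator $\sum_{t=0}^{T-1}\eta_\theta^t$ becomes $T/(4L_\theta)$, so the optimization-error term reads $16L_\theta\bigl(F(\theta^T)-F(\theta^0)\bigr)/T$. To make this $\mc{O}(\epsilon)$ I would first argue that $F(\theta^T)-F(\theta^0)$ is bounded by an absolute constant: the feasible occupancy measures form a compact polytope and each $f_i$ has $\|\nabla_{\lambda_i}f_i\|_\infty \le M_f$ by Assumption~\ref{assump:utility}, so $F$ is uniformly bounded over $\Theta$. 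Hence the choice $T = \mc{O}(\epsilon^{-1})$ forces the optimization term down to $\mc{O}(\epsilon)$.

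Next I would control the residual constant $\Delta(\delta_0)$ from \eqref{eq:Delta_mag} term by term. The truncation bias $\mc{O}(n\phi_0^{2\kappa})$ is irreducible and is carried through unchanged. For the remaining sum $\sum_{i\in\mcN}\frac{|\mcNk_i|^2}{n^2}\,\mc{O}\bigl(\epsilon_0^2+\log(1/\delta_0)/B+\gamma^{2H}\bigr)$, I would verify that each summand contribution is forced to $\mc{O}(\epsilon)$: the oracle accuracy $\epsilon_0=\sqrt{\epsilon}$ gives $\epsilon_0^2=\epsilon$; the batch size $B = \mc{O}\bigl(\log(1/\delta_0)\epsilon^{-1}\bigr)$ gives $\log(1/\delta_0)/B = \mc{O}(\epsilon)$; and taking $H \ge \log(1/\epsilon)/(2\log(1/\gamma)) = \mc{O}(\log(1/\epsilon))$ yields $\gamma^{2H}\le\epsilon$. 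Since $|\mcNk_i|^2/n^2 \le 1$, the network-dependent prefactor is an absorbed constant (and is in fact $\mc{O}(1/n)$ in the localized regime of bounded neighborhoods), so the full sum is $\mc{O}(\epsilon)$. Combining with the optimization term gives $\frac{1}{T}\sum_{t=0}^{T-1}\|\nabla_\theta F(\theta^t)\|^2 = \mc{O}(\epsilon + n\phi_0^{2\kappa})$ with probability $1-\delta$, exactly as in \eqref{eq:sample_complexity}; the probability accounting is inherited verbatim from Theorem~\ref{thm:1} through $\delta_0=\delta/(2nT)$.

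Finally I would tally the samples. Each iteration draws $B$ trajectories of length $H$ for the occupancy-measure estimate \eqref{eq:occupancy_estimate}, and invokes the $\mc{O}(\epsilon_0^{-2})$ oracle for the truncated Q-function \eqref{eq:Q_estimation}. With $\delta_0=\delta/(2nT)$ we have $\log(1/\delta_0)=\mc{O}\bigl(\log(nT/\delta)\bigr)=\widetilde{\mc{O}}(1)$, so $B=\widetilde{\mc{O}}(\epsilon^{-1})$, while $H=\mc{O}(\log(1/\epsilon))=\widetilde{\mc{O}}(1)$ and $\epsilon_0^{-2}=\epsilon^{-1}$. Thus the per-iteration cost is $BH+\mc{O}(\epsilon_0^{-2})=\widetilde{\mc{O}}(\epsilon^{-1})$, and multiplying by $T=\mc{O}(\epsilon^{-1})$ gives the total budget $\widetilde{\mc{O}}(\epsilon^{-2})$. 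The result is essentially a corollary of Theorem~\ref{thm:1}, so the only genuinely delicate points are confirming the uniform boundedness of $F$ (which licenses $T=\mc{O}(\epsilon^{-1})$) and checking that the merely logarithmic episode length $H$ and the $\log(1/\delta_0)$ factor are absorbed into the tilde, leaving the sample complexity at $\widetilde{\mc{O}}(\epsilon^{-2})$ rather than inflating it.
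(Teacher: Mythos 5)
Your proposal is correct and takes essentially the same route as the paper: Theorem~\ref{thm:2} is obtained by specializing Theorem~\ref{thm:1} to the prescribed parameter schedule (constant step-size, $\epsilon_0=\sqrt{\epsilon}$, $B=\mc{O}(\log(1/\delta_0)\epsilon^{-1})$, $H=\mc{O}(\log(1/\epsilon))$) and then tallying the per-iteration sample cost $BH+\mc{O}(\epsilon_0^{-2})=\widetilde{\mc{O}}(\epsilon^{-1})$ over $T=\mc{O}(\epsilon^{-1})$ iterations. The paper's own proof is merely a terser version of this argument (it asserts the specialization ``follows directly''), so your additional details---the uniform boundedness of $F$ via compactness of the occupancy polytope, and the term-by-term control of $\Delta(\delta_0)$---just make explicit what the paper leaves implicit.
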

\begin{proof}
The $\epsilon$-stationarity \eqref{eq:sample_complexity} follows directly from \eqref{eq:iteration_complexity} and \eqref{eq:Delta_mag} in Theorem \ref{thm:1}.
In every iteration, $B\times H = \widetilde{\mc{O}}(\epsilon^{-1})$ samples are used to estimate the occupancy measure and compute the empirical shadow reward, (perhaps another) $\mc{O}(1/\epsilon_0^2) = \mc{O}(\epsilon^{-1})$ samples are used to estimate the truncated Q-function.
Since there are $T = \mc{O}(\epsilon^{-1})$ iterations, the total number of samples used is $\widetilde{\mc{O}}(\epsilon^{-2})$.
\end{proof}

As discussed in Section \ref{sec:alg}, the TD-learning procedure \eqref{eq:td_learning} is an $\mc{O}(1/(\epsilon_0)^2)$ oracle for the truncated Q-function estimation with high probability.
Below, we provide two further remarks.
\begin{remark}[Global Optimality]
Suppose that the utility function $f(\lambda)$ is concave in $\lambda$, which generalizes the linear objective for standard RL.
If the policy parameterization satisfies \cite[Assumption 5.11]{zhang2021convergence}, then problem \eqref{eq:dec_prob} does not have spurious local solutions.
Thus, the error bound \eqref{eq:iteration_complexity} implies convergence to global optimality.
\end{remark}

\begin{remark}
The communication radius $\kappa$ plays an important role in both Theorems \ref{thm:1} and \ref{thm:2}. As $\kappa$ increases, the term $\phi_0^{2\kappa}$ decreases, yet the size of the $\kappa$-neighborhood $|\mcNk_i|$ increases, making the constant $\sum_{i\in \mcN}{|\mcNk_i|^2}/{n^2}$ increase.
Also, the increase of $|\mcNk_i|$ will amplify the communication cost and make the estimation of truncated Q-functions less efficient.
Thus, finding a good balance is important in determining $\kappa$.
\end{remark}

\begin{remark}
In this work, we focus on the policy search in a class of localized policies, where each local policy $\pti(a_i\vert s_{\mcNk_i})$ only depends on the states of agents in $\mcNk_i$.
It is possible to relax this ``hard'' requirement to a ``soft'' requirement. 
Below, we briefly describe the idea of this extension.

Consider the factorization $\pi_\theta(a\vert s) = \prod_{i\in \mc{N}}\pi_{\theta_{i}}^{i}\left(a_{i} \vert s\right)$, where each local policy $\pti$ depends on the global state $s$. We assume that a form of spatial correlation decay property holds for the local policy $\pti(a_i\vert s)$ and the associated local score function $\psi_{\theta_i}(a_i\vert s)=\nabla_{\theta_i}\log\pti(a_i\vert s)$, such that
\begin{equation}\label{eq:condition_decay}
\begin{aligned}
\sup_{s,s^\prime_{\mcNk_{-i}}} &\operatorname{TV}\left(\pti(\cdot\vert s_{\mcNk_{i}},s_{\mcNk_{-i}}),\pti(\cdot\vert s_{\mcNk_{i}},s^\prime_{\mcNk_{-i}}) \right)\leq c_1\phi_1^{\kappa},\\
\sup_{s,s^\prime_{\mcNk_{-i}}} &\operatorname{TV}\left(\psi_{\theta_i}(\cdot\vert s_{\mcNk_{i}},s_{\mcNk_{-i}}),\psi_{\theta_i}(\cdot\vert s_{\mcNk_{i}},s^\prime_{\mcNk_{-i}}) \right)\leq c_1\phi_1^{\kappa},
\end{aligned}
\end{equation}
where $c_1\geq 0$ and $\phi_1\in(0,1)$ are two constants. In light of this decay property, we define the induced truncated policy of $\pti$ as (similar to the definition of truncated Q-function)
\begin{equation}
\hpti(a_i\vert s_{\mcNk_i}):= \pti(a_i\vert s_{\mcNk_i},\bar{s}_{\mcNk_{-i}}),\ \forall s_{\mcNk_i}\in \mc{S}_{\mcNk_i},
\end{equation}
where $\bar{s}_{\mcNk_{-i}}$ is any fixed state for the agents in $\mcNk_{-i}$.
Similarly, we define $\widehat \psi(a_i\vert s_{\mcNk_i})$ as the truncated score function.

By using the truncated policy and score function, we can still implement Algorithm \ref{alg:tpg} without violating the observability and communication requirements.
Meanwhile, it is important to quantify the information loss in using the truncated policy as an approximation for the true policy, which depends on the global state.
Specifically, new approximation errors would arise in the trajectory sampling, and then affect the estimation of shadow rewards, shadow Q-functions, and policy gradients.
Under condition \eqref{eq:condition_decay}, the errors in the occupancy measure can be upper-bounded as follows.
\begin{lemma}\label{lemma:occupancy_measure}
Suppose that condition \eqref{eq:condition_decay} holds. Let $\hpt:=\prod_{i\in \mcN} \hpti$ be the induced truncated policy of $\pt$. It holds that
\begin{equation}
\left\|\lambda_i^{\hat{\pi}_\theta}-\lambda_i^{\pi_\theta}\right\|_1 \leq \frac{n c_1 \phi_1^k}{(1-\gamma)^2}, \forall i \in \mathcal{N} .
\end{equation}
\end{lemma}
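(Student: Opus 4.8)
\emph{Proof proposal.} The plan is to reduce the claim to a perturbation bound for the \emph{global} occupancy measure, and then control that perturbation by the per-step discrepancy between $\pt$ and its truncation $\hpt$. First I would pass from local to global: since $\lambda_i^\pi$ is obtained from $\lambda^\pi$ by marginalizing out the coordinates outside agent $i$, and marginalization is a contraction in the $\ell_1$ norm, we have $\|\lambda_i^{\hpt}-\lambda_i^{\pt}\|_1\le\|\lambda^{\hpt}-\lambda^{\pt}\|_1$ for every $i$; so it suffices to bound the right-hand side, and the factor $n$ in the statement must enter through the policy gap rather than through this step. For the policy gap I would exploit that both $\pt(\cdot\vert s)$ and $\hpt(\cdot\vert s)$ are product distributions over the agents, so by subadditivity of total variation for product measures, for every fixed $s$,
\begin{equation}
\operatorname{TV}\!\left(\pt(\cdot\vert s),\hpt(\cdot\vert s)\right)\le \sum_{i\in\mcN}\operatorname{TV}\!\left(\pti(\cdot\vert s),\hpti(\cdot\vert s_{\mcNk_i})\right)\le n\,c_1\phi_1^{\kappa},
\end{equation}
where the last inequality is the first line of condition \eqref{eq:condition_decay} applied with $s^\prime_{\mcNk_{-i}}=\bar s_{\mcNk_{-i}}$. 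Write $\epsilon:=\sup_s\operatorname{TV}(\pt(\cdot\vert s),\hpt(\cdot\vert s))\le n c_1\phi_1^{\kappa}$.

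The heart of the argument is to bound $\|\lambda^{\hpt}-\lambda^{\pt}\|_1$ by $\epsilon$. Writing $\lambda^\pi=\sum_{k\ge0}\gamma^k\mu_k^\pi$ with $\mu_k^\pi$ the joint state--action law at step $k$, I would prove a per-step estimate by telescoping along the two induced Markov chains. Because both policies act through the \emph{same} transition kernel $\mbb{P}$ and start from the \emph{same} $\xi$, the step-$0$ state laws agree; decomposing one transition into ``perturb the current state law'' plus ``perturb the one-step policy'' gives the recursion $\|\nu_{k+1}^{\hpt}-\nu_{k+1}^{\pt}\|_1\le\|\nu_k^{\hpt}-\nu_k^{\pt}\|_1+2\epsilon$ for the state laws $\nu_k$, hence $\|\nu_k^{\hpt}-\nu_k^{\pt}\|_1\le 2k\epsilon$, and then $\|\mu_k^{\hpt}-\mu_k^{\pt}\|_1\le 2(k+1)\epsilon$ after reattaching the policy factor. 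Summing the discounted series and using $\sum_{k\ge0}(k+1)\gamma^k=(1-\gamma)^{-2}$ gives
\begin{equation}
\|\lambda^{\hpt}-\lambda^{\pt}\|_1\le 2\epsilon\sum_{k\ge0}(k+1)\gamma^k=\frac{2\epsilon}{(1-\gamma)^2}\le\frac{2n c_1\phi_1^{\kappa}}{(1-\gamma)^2}.
\end{equation}
A cleaner route avoiding the recursion is a maximal coupling of the two chains: couple the actions at each shared state so that they agree with probability $1-\operatorname{TV}(\pt,\hpt)$; since matched actions keep the chains coupled under the common kernel, $\operatorname{TV}(\mu_k^{\hpt},\mu_k^{\pt})$ is at most the probability of a disagreement by step $k$, which a union bound caps at $(k+1)\epsilon$.

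The main obstacle is the per-step accumulation: a single-step policy discrepancy of order $\epsilon$ feeds into every subsequent state law, so the step-$k$ error grows linearly in $k$, and one must verify that the discount $\gamma^k$ tames the resulting $\sum_k(k+1)\gamma^k$ into the $(1-\gamma)^{-2}$ factor, which is what produces two powers of $(1-\gamma)^{-1}$ rather than one. I would also take care that the telescoping correctly separates the state-law perturbation from the policy perturbation at each step, and that the common transition kernel is used to collapse the matched-action term (otherwise one picks up spurious transition-perturbation terms). This route reaches the stated bound up to the universal factor of $2$ coming from the identity $\tfrac12\|\cdot\|_1=\operatorname{TV}$; matching the constant in the statement exactly would require carrying that identity through the per-step estimate rather than passing to $\ell_1$ at the outset.
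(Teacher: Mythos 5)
The paper never proves this lemma in-text — it defers to the reference \cite{ying2023scalable} — so your proposal can only be judged against the standard argument, which is exactly the route you take: (i) marginalization is an $\ell_1$-contraction, so it suffices to bound the global occupancy discrepancy; (ii) the per-state policy gap is controlled by telescoping over the product structure, $\operatorname{TV}\left(\pt(\cdot\vert s),\hpt(\cdot\vert s)\right)\le\sum_{i\in\mcN}\operatorname{TV}\left(\pti(\cdot\vert s),\hpti(\cdot\vert s_{\mcNk_i})\right)\le n c_1\phi_1^\kappa$, using the first line of \eqref{eq:condition_decay} with $s^\prime_{\mcNk_{-i}}=\bar s_{\mcNk_{-i}}$; (iii) the per-step laws drift by at most the policy gap per step because the two chains share the kernel $\mbb{P}$ and the initial distribution $\xi$; (iv) the discounted sum $\sum_{k\ge0}(k+1)\gamma^k=(1-\gamma)^{-2}$ produces the two powers of $(1-\gamma)^{-1}$. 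Each of these steps, including your recursion $\|\nu_{k+1}^{\hpt}-\nu_{k+1}^{\pt}\|_1\le\|\nu_k^{\hpt}-\nu_k^{\pt}\|_1+2\epsilon$ and the coupling alternative, checks out.

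The one genuine gap is the constant, and it is not repairable in the way you claim. Your argument yields $\|\lambda_i^{\hpt}-\lambda_i^{\pt}\|_1\le 2nc_1\phi_1^\kappa/(1-\gamma)^2$, a factor of $2$ weaker than the statement, and your closing suggestion — that the factor disappears if one ``carries the identity $\tfrac12\|\cdot\|_1=\operatorname{TV}$ through the per-step estimate'' — does not work. Running the entire recursion (or the maximal coupling) in total variation gives $\operatorname{TV}\left(\mu_k^{\hpt},\mu_k^{\pt}\right)\le(k+1)\epsilon$ with $\epsilon=\sup_s\operatorname{TV}\left(\pt(\cdot\vert s),\hpt(\cdot\vert s)\right)$, but the lemma's left-hand side is an $\ell_1$ norm, and converting back at the end costs exactly the same factor:
\begin{equation}
\left\|\mu_k^{\hpt}-\mu_k^{\pt}\right\|_1=2\operatorname{TV}\left(\mu_k^{\hpt},\mu_k^{\pt}\right).
\end{equation}
With the paper's convention $\operatorname{TV}(\mu,\mu^\prime)=\sup_{A}|\mu(A)-\mu^\prime(A)|=\tfrac12\|\mu-\mu^\prime\|_1$, the factor of $2$ is intrinsic to any proof that, like yours, uses only the sup-TV policy gap and per-step accumulation; steps (ii) and (iii) are individually tight, so no reshuffling of where the identity is applied can remove it. What you have actually proved is the lemma with $2n$ in place of $n$ (equivalently, the stated bound with $\|\cdot\|_1$ replaced by $\operatorname{TV}$ on the left). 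This is immaterial for how the lemma is used — the remark only invokes the $\mc{O}(\phi_1^\kappa)$ order, and your bound has the correct $n$- and $(1-\gamma)^{-2}$-dependence — but you should either state the weaker constant honestly or flag that matching the paper's constant would require a sharper (or differently normalized) argument, rather than asserting the discrepancy is cosmetic.
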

When Assumption \ref{assump:utility} holds, we can further show that the errors in the empirical shadow rewards and truncated Q-functions have the same order $\mc{O}(\phi_1^\kappa)$ as the occupancy measure.
Together, the same convergence rate and sample complexity as in Theorems \ref{thm:1} and \ref{thm:2} can be proved with an approximation error that has the order $\mc{O}(\phi_0^{2\kappa}+\phi_1^{2\kappa})$, which accounts for the inaccuracies from both the use of truncated policy gradients and the truncated policies. 
We refer the reader to \cite{ying2023scalable} for the proof of Lemma \ref{lemma:occupancy_measure}.

\end{remark}

\section{CONCLUSIONS}
In this paper, we study the scalable  MARL with general utilities, defined as nonlinear functions of the team's long-term state-action occupancy measure.
We propose a scalable distributed policy gradient algorithm with shadow reward and localized policy, which has three steps: (1) shadow reward estimation, (2) truncated shadow Q-function estimation, and (3) truncated policy gradient estimation and policy update.
By exploiting the spatial correlation decay property of the network structure, we rigorously establish the convergence and sample complexity of the proposed algorithm.
Future work includes generalization to the safety-critical setting and considering information asymmetry among the agents.



\section*{ACKNOWLEDGMENT}
This work was supported by grants from ARO, AFOSR, ONR and NSF.

\bibliographystyle{unsrt}
\bibliography{ref}

\end{document}